\theoremstyle{plain}
\newtheorem{theorem}{Theorem}[section]
\newtheorem{proposition}[theorem]{Proposition}
\theoremstyle{definition}
\theoremstyle{remark}
\newcommand{\dd}[1]{\, \mathrm{d}#1}
\newcommand{\R}{\mathbb{R}}
\newcommand{\norm}[1]{\left\|#1\right\|}
\newcommand{\set}[2]{\left\{#1 \mid #2\right\}}
\newcommand{\Unif}[1]{\mathcal{U}\left(#1\right)}
\DeclareMathOperator*{\argmin}{arg\,min}
\DeclareMathOperator*{\argmax}{arg\,max}
\newacronym{ml}{ML}{Machine Learning}
\newacronym{nn}{NN}{neural network}
\newacronym{ec}{EC}{Extremal Combinatorics}
\newacronym{hn}{HN}{Hadwiger-Nelson}
\newacronym{ai}{AI}{Artificial Intelligence}
\icmltitlerunning{Neural Discovery in Mathematics}
\begin{document}

\twocolumn[
\icmltitle{Neural Discovery in Mathematics: \\
Do Machines Dream of Colored Planes?}



\icmlsetsymbol{equal}{*}

\begin{icmlauthorlist}
\icmlauthor{Konrad Mundinger}{zib,tub}
\icmlauthor{Max Zimmer}{zib,tub}
\icmlauthor{Aldo Kiem}{zib,tub}
\icmlauthor{Christoph Spiegel}{zib,tub}
\icmlauthor{Sebastian Pokutta}{zib,tub}
\end{icmlauthorlist}

\icmlaffiliation{zib}{Department for AI in Science, Society, and Technology, Zuse Institute Berlin, Germany}
\icmlaffiliation{tub}{Institute of Mathematics, Technische Universität Berlin, Germany}

\icmlcorrespondingauthor{Konrad Mundinger}{mundinger@zib.de}
\icmlcorrespondingauthor{Christoph Spiegel}{spiegel@zib.de}

\icmlkeywords{Machine Learning, Extremal Combinatorics, Discrete Geometry, \gls{hn}}

\vskip 0.3in
]

\printAffiliationsAndNotice{}

\begin{abstract}
    We demonstrate how neural networks can drive mathematical discovery through a case study of the Hadwiger-Nelson problem, a long-standing open problem at the intersection of discrete geometry and extremal combinatorics that is concerned with coloring the plane while avoiding monochromatic unit-distance pairs. Using neural networks as approximators, we reformulate this mixed discrete-continuous geometric coloring problem with hard constraints as an optimization task with a probabilistic, differentiable loss function. This enables gradient-based exploration of admissible configurations that most significantly led to the discovery of two novel six-colorings, providing the first improvement in thirty years to the off-diagonal variant of the original problem \cite{2024_SixcoloringsExpansion}. Here, we establish the underlying machine learning approach used to obtain these results and demonstrate its broader applicability through additional numerical insights.
\end{abstract}

\section{Introduction}

The discovery of mathematical constructions is fundamental to theoretical mathematics, where proving the existence of objects with certain properties can advance our understanding of abstract structures.
Traditional computational methods have aided the search for such constructions through exhaustive searches, optimization, and heuristics, while recent advances in \gls{ml} offer new ways to explore mathematical structures and guide intuition.
Purely discrete problems can be approached through discrete and combinatorial optimization and purely continuous problems through numerical methods, yet many important theoretical problems involve both discrete and continuous aspects.
These mixed discrete-continuous problems have remained particularly challenging for traditional approaches.

\begin{figure}[t]
    \centering
    \includegraphics[width=0.48\textwidth]{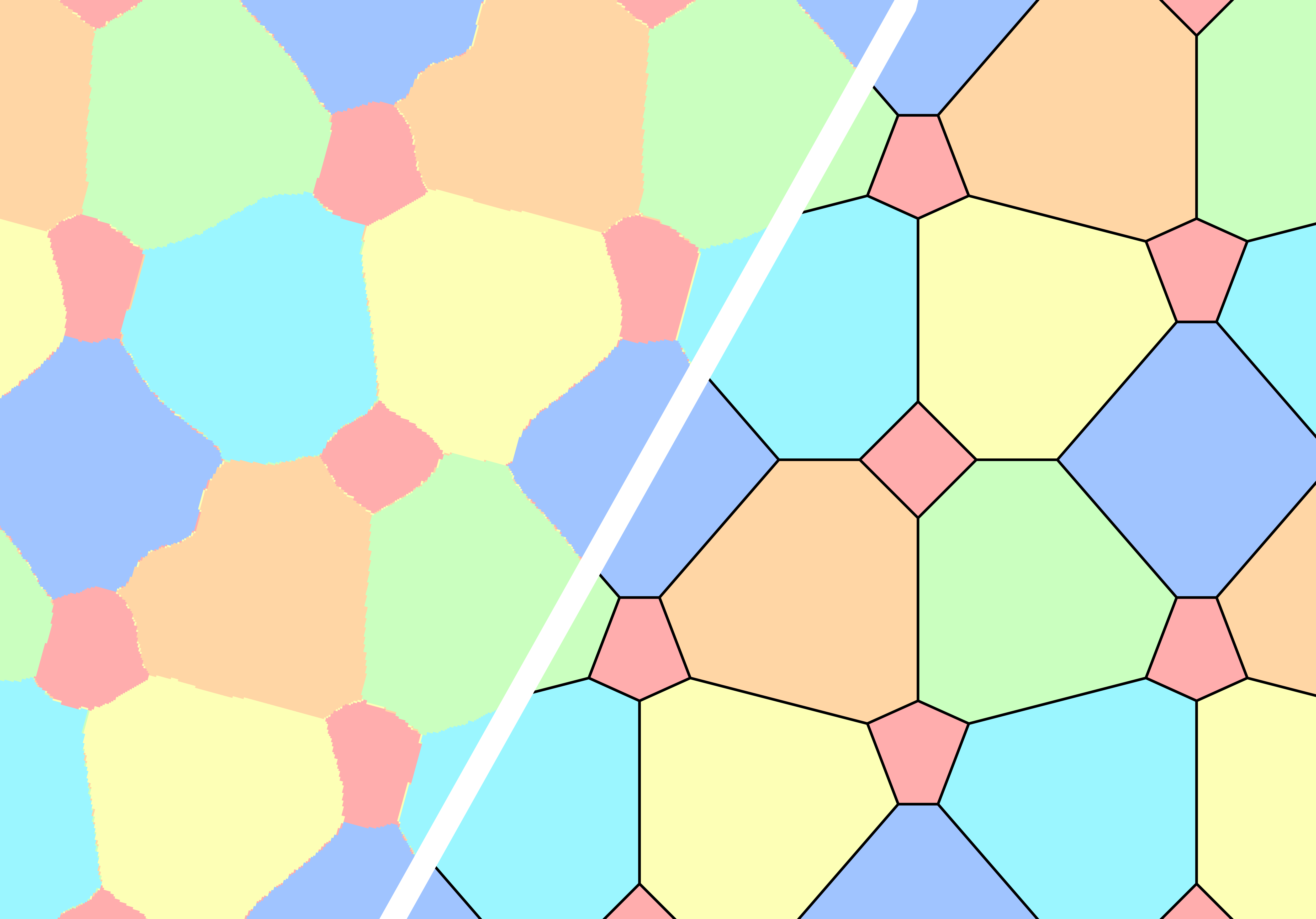}
    \caption{Neural network output (left) that inspired our formal construction (right) of a novel six-coloring of the plane. Red points avoid all distances between 0.418 and 0.657 while other colors avoid unit distances. The neural network was trained through gradient descent on a continuous and differentiable relaxation of these geometric constraints.
    }\label{fig:inspiration-and-formalization}
\end{figure}

The \gls{hn} problem perfectly illustrates these challenges:
determine the minimum number of colors needed to color the Euclidean plane so that no two points at unit distance share the same color.
This number is often referred to as the \emph{chromatic number of the plane} $\chi(\R^2)$.
Despite its simple statement, this problem combines discrete choices (color assignments), continuous geometry (the Euclidean plane), and hard constraints (unit-distance requirements) in a way that has resisted resolution for over seventy years.
While SAT solvers and discrete optimization techniques have contributed to finding lower bounds, discovering new colorings has remained challenging.

We present a framework that reformulates this geometric coloring problem as a continuous optimization task by introducing probabilistic colorings and a differentiable loss function.
This continuous formulation enables us to use \glspl{nn} as universal function approximators to explore the solution space computationally.
The networks return numerical approximations of colorings that are equally capable of representing regular patterns or unexpected, irregular solutions (largely) without any strong built-in assumptions about symmetry or periodicity.

Our method serves as a tool for mathematical discovery, operating across a spectrum spanning from fully automated approaches to human-assisted pattern recognition and formalization.
Importantly, numerical results do not constitute formal proofs but rather provide guidance that must be formalized.
In practice, our approach often discovers clearly interpretable patterns and tessellations that generate insights and candidate constructions, which can be turned into formal results through careful mathematical analysis or, in select cases with sufficient freedom in the solution space, through more automated computational approaches.

\paragraph{Contributions.}

The main contribution of this work is a description of a continuous optimization framework for geometric coloring problems that uses \glspl{nn} and probabilistic colorings, enabling gradient-based exploration of solution spaces without strong prior assumptions or discretization of the underlying space. This framework was essential in the discovery of several novel colorings that imply improvements for variants of the \gls{hn} problem:

\begin{enumerate}[itemsep=3pt, topsep=0pt, parsep=0pt, partopsep=0pt]
    \item We obtained two 6-colorings of the plane
    for a variant of the original problem where five colors must avoid unit distances while the sixth color avoids a different specified distance. These colorings significantly expand the known range of realizable distances to $[0.354, 0.657]$ from the previous best of $[0.415, 0.447]$ due to \citet{hoffman1996another} and \citet{soifer1994infinite}.
    \item We obtained a 5-coloring covering all but 3.74\% of the plane where each of the five colors must avoid unit distances, improving on the previous best known bound of around 4.01\% due to~\citet{parts2020percent}. We also obtained a 14-coloring covering all but 3.46\% of three-dimensional space.
    \item We extended the range of monochromatic triangles avoidable when coloring the plane with a given number of colors, another well-known variant. The previous best bounds are due to \citet{aichholzer2019triangles}.
\end{enumerate}
Formal descriptions of the colorings underlying the first result were published separately by \citet{2024_SixcoloringsExpansion}, while papers describing the remaining results are currently in preparation.
In the present article, besides describing the underlying framework that enabled this mathematical discovery, we provide additional numerical insights and include negative results for other variants where the framework indicated no avenue for improvement.

\paragraph{Related Literature.}

Recent years have seen growing interest in applying \gls{ml} to mathematical discovery, from automated theorem proving to finding novel constructions. We focus on the latter, specifically on approaches that use ML to discover mathematical objects with desired properties. For a broader perspective, we refer to recent surveys by \citet{wang2023scientific, krenn2022scientific, williamson2023deep}.

Several approaches have emerged for using ML in combinatorial discovery. Most closely related to our work, \citet{karalias2020erdos} developed an unsupervised GNN-based framework that uses probabilistic relaxations and differentiable loss functions to solve graph optimization problems. While they focus on purely discrete problems like maximum clique finding, we consider a problem with an inherent continuous structure. \citet{fawzi2022discovering} solved a discrete optimization problem using a reinforcement learning approach, from which they were able to derive a faster matrix multiplication algorithm.
Similarly, \citet{wagner2021constructions} studied the ability of a reinforcement-learning approach to construct discrete counterexamples to conjectures in extremal combinatorics. Building on this framework, \citet{roucairol2022refutation} studied conjectures in spectral graph theory. However, \citet{parczyk2023fully} found that this approach was largely outperformed by well-established local searches while studying the Ramsey multiplicity problem. This finding was at least in part supported by the work of \citet{mehrabian2023finding}. \citet{charton2024patternboost} combine both approaches and propose {\tt PatternBoost}, which trains an \gls{nn} on the output of a local search and which was also used by \citet{berczi2024note}. Taking a markedly different approach, \citet{romera2024mathematical} propose {\tt FunSearch}, which uses LLMs to guide greedy search through evolutionary optimization of generated code. \citet{novikov2025alphaevolve} very recently improved upon this approach through {\tt AlphaEvolve}. Other notable approaches of applying \gls{ml} to mathematical problems in general include \citet{alfarano2024global} and \citet{trinh2024solving}.

Our approach differs from the ones used in these results, since we use \gls{ml} primarily as a tool for mathematical discovery requiring further interpretation and formalization, similar to work of \citet{davies2021advancing} on knot invariants, which laid the foundation for \citet{davies2021signature} to prove a formal statement with significant additional mathematical effort. This approach of using \gls{ai} to guide mathematical intuition, while still requiring rigorous proofs, represents a challenging but underserved direction in \gls{ai}-assisted mathematics - one that demands expertise in both \gls{ml} and pure mathematics.

The paradigm of approximating functions by \glspl{nn} which take coordinates as input is known as \emph{implicit neural representations} \citep{Park_Florence_Straub_Newcombe_Lovegrove_2019,Mescheder_Oechsle_Niemeyer_Nowozin_Geiger_2019, Chen_Zhang_2019}. Our approach adopts this strategy while introducing a loss function that penalizes deviations from desired geometric properties. This connects our method to physics-informed neural networks \citep{Raissi_Perdikaris_Karniadakis_2019_PINNs,Cuomo_Cola_Giampaolo_Rozza_Raissi_Piccialli_2022,Wang_Sankaran_Wang_Perdikaris_2023}, where \glspl{nn} approximate solutions to differential equations through unsupervised residual minimization. \citet{Berzins_Radler_Volkmann_Sanokowski_Hochreiter_Brandstetter_2024} recently extended these ideas to geometric constraints.

Finally, our approach fits within the broader \emph{AI4Science} paradigm, where \gls{ml} is revolutionizing scientific discovery in domains such as protein structure prediction \citep{Jumper_2021} or the earth sciences \citep{pauls2024estimating}.

\section{The Hadwiger-Nelson Problem}\label{sec:hadwiger-nelson}

First posed in 1950, the \gls{hn} problem has become one of the most famous open questions in combinatorial geometry and graph theory, with a rich history of partial results and increasingly sophisticated approaches. For a comprehensive treatment of the problem, see \citet{soifer2009mathematical}.

Lower bounds for $\chi(\R^2)$ are typically established by constructing unit distance graphs -- graphs whose vertices can be embedded as points in the plane in such a way that edges only connect vertices whose embeddings are a unit distance apart.
The Moser spindle establishes that at least four colors are needed~\cite{moser1961solution} and this bound remained the best known until the breakthrough proof of \citet{DeGrey2018ChromaticNumber} established that $\chi(\R^2) \ge 5$. His construction, originally involving a unit distance graph with 20,425 vertices, sparked intense and largely computational efforts, including through a Polymath project, to simplify and reduce this graph~\cite{Heule2018ComputingSmallGraphs,exoo2020chromatic, parts2020graph, Polymath2021ChromaticNumber, gwyn2020finite}. 

Upper bounds on the other hand are commonly established through explicit colorings of the plane. There exists a large number of distinct seven-colorings that avoid monochromatic pairs at unit distance, the first of which used a tiling of congruent regular hexagons and was already discovered in 1950 according to \citet{soifer2009mathematical}. This upper bound of $\chi(\R^2) \le 7$ has remained unchanged since. While computational methods have proven highly successful in establishing lower bounds, there have been surprisingly few systematic computational approaches to constructing new colorings. The only one we are aware of was the use of SAT solvers to color coarsely discretized versions of specific metric spaces as part of the discussions during the Polymath project~\cite{Polymath2021ChromaticNumber}, highlighting the need for techniques that can explore the continuous solution space more directly.

Given the challenge of closing the gap between the lower and upper bounds of the original problem, several natural variants have been proposed. Solutions to these variants can provide additional insight into the underlying structure and complexity of the original problem.

\begin{figure}[h]
    \centering
    \includegraphics[width=0.48\textwidth]{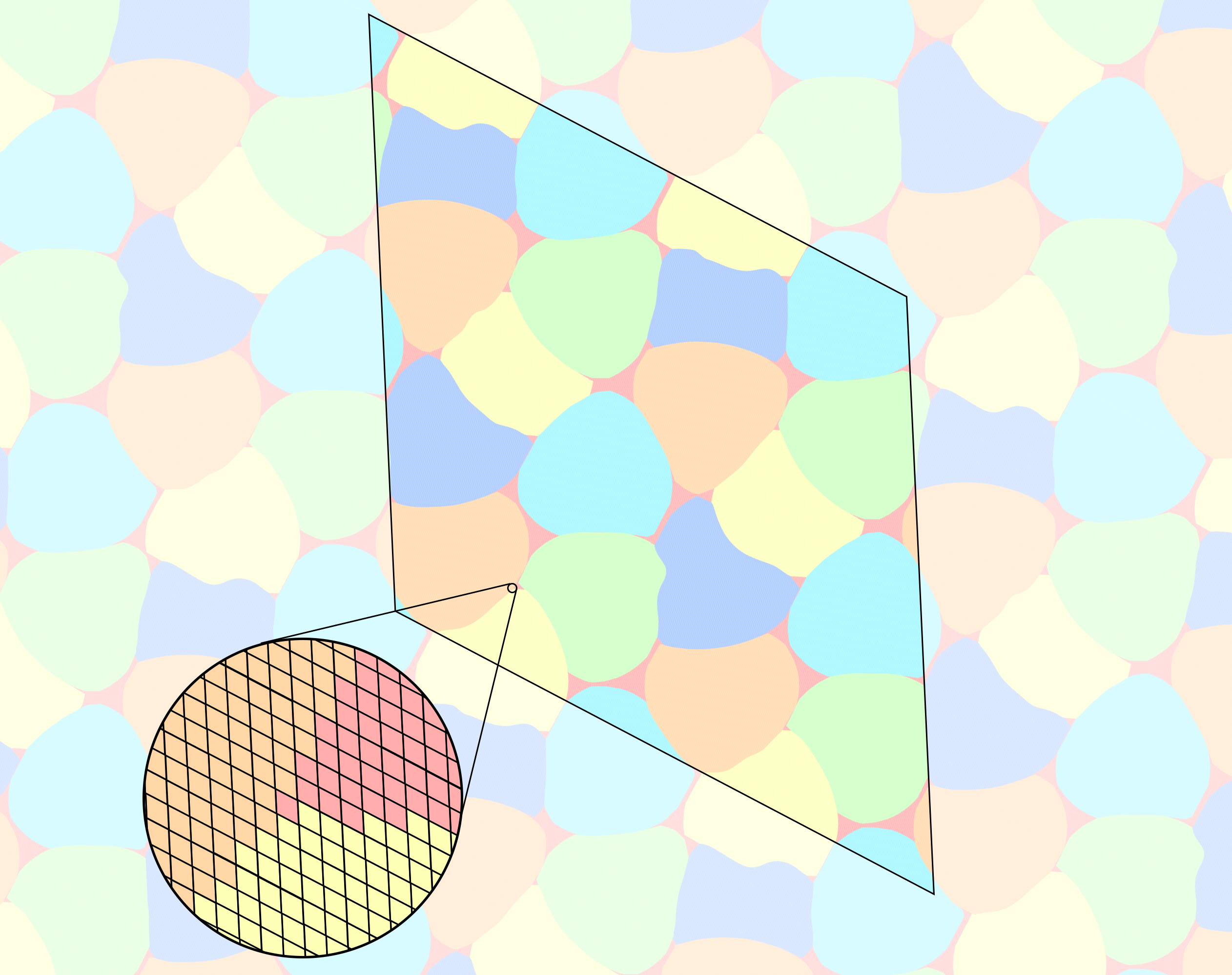}
    \caption{The formalized almost 5-coloring obtained using the discretization procedure described in \cref{alg:discrete-almost-coloring}. It is periodic and constant on small parallelograms as indicated by the magnified section. The red color represents the part that covers 3.74\% of $\R^2$ and that needs to be removed.}\label{fig:almost-5-coloring}
\end{figure}

\paragraph{Variant 1: Almost coloring the plane.}
A natural question concerns how much of the plane needs to be removed before the remainder can be colored with six colors without monochromatic unit-distance pairs.
\citet{pritikin1998all}, with a later refinement due to \citet{parts2020percent}, established that 99.985\% of the plane can be colored with six colors while maintaining this property. 
This result also effectively bounds the density of a potential counterexample: any unit distance graph requiring seven colors would need to have order at least 6,993, see Lemma~1 by \citet{pritikin1998all} for a proof.
The construction uses intersecting pentagonal rods and
our \gls{nn} approach, when applied to the original \gls{hn} problem with six colors, consistently recovered structures remarkably similar to these pentagonal constructions. \citet{parts2020percent} also studied the same question for fewer than  six colors and we likewise applied our approach in this setting with a modification in the form of a Lagrangian term. This resulted in an improvement for the 5-color variant, where our construction, see \cref{fig:almost-5-coloring}, successfully covers all but 3.7356\% of the plane. This improves on the previous best value of 4.0060\% due to \citet{parts2020percent}, who explicitly asked in his paper for new approaches to push the value below the mark of 4\%.
See \cref{sec:almost-coloring-numerical} for further details.

\begin{figure}[t]
    \centering
    \input{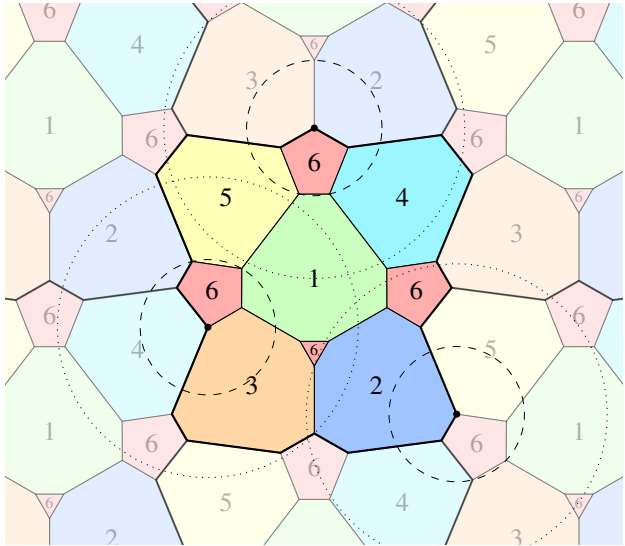}
    \caption{Formalized version of a neural network-generated 6-coloring of the plane where no red points appear at distance 0.45 and no other color has monochromatic unit-distance pairs.  The dotted circles have unit distance radius while the dashed circle have radius 0.45.}\label{fig:firstcoloring}
\end{figure}

\paragraph{Variant 2: Avoiding different distances.}

Another natural generalization of the problem considers colorings where different colors avoid different distances. More precisely, we say that an $k$-coloring of the plane has \emph{coloring type} $(d_1, \ldots, d_k)$ if color $i$ does not realize distance $d_i$. This defines a natural `off-diagonal' variant of the original problem, where finding a coloring of type $(1,1,1,1,1,1)$ would establish $\chi(\R^2) \le 6$.
Stechkin found a coloring of type $(1,1,1,1,1/2,1/2)$ \cite{raiskii1970realization} and \citet{woodall1973distances} constructed one of type $(1,1,1,1/\sqrt{3},1/\sqrt{3},1/\sqrt{12})$. Erd\H{o}s asked for the \emph{polychromatic number of the plane} $\chi_p(\R^2)$ \cite{soifer1992relatives}, that is the smallest number of colors $k$ such that there exist some $d_1, \ldots, d_k$ and a coloring of type $(d_1, \ldots, d_k)$. So far no better bounds than $4 \le \chi_p(\R^2) \le 6$ are known. 
\citet{soifer1992six} found the first six-coloring with a non-unit distance in only one color, having type $(1,1,1,1,1,1/\sqrt{5})$. \citet{hoffman1993almost} later found colorings of type $(1,1,1,1,1,\sqrt{2}-1)$. These constructions are part of a family that realizes $(1,1,1,1,1,d)$ for any $\sqrt{2}-1 \le d \le 1/\sqrt{5}$ \citep{hoffman1996another,soifer1994infinite}. This led \citet{Soifer1994SixRealizable} to pose the {\lq}still open and extremely difficult{\rq} \citep{nash2016open} problem of determining the continuum of six colorings -- that is the set of all $d$ for which a six-coloring of type $(1,1,1,1,1,d)$ exists.

Using our approach we discovered two novel colorings of the plane that extend the continuum of six colorings, providing the first improvement in thirty years.
The first coloring is parameterized by $d$ and valid for type $(1,1,1,1,1,d)$ as long as $0.354 \le d \le 0.553$, see \cref{fig:firstcoloring}.
The second coloring is constant and covers the range of $0.418 \le d \leq 0.657$, see \cref{fig:inspiration-and-formalization} and \cref{fig:secondcoloring}.
Together they significantly expand the range of distances for which colorings are known to exist.
A complete description of these constructions is given by~\citet{2024_SixcoloringsExpansion}.
We also explored the polychromatic number computationally but found no way to improve the upper bound, see also \cref{sec:offdiagonal-numerical}.

\paragraph{Variant 3: Coloring $n$-dimensional space.}

This variant extends the \gls{hn} problem to higher dimensions, seeking to determine the chromatic number $\chi(\R^n)$ of $n$-dimensional space while maintaining the constraint that points at unit distance must receive different colors. A general lower bound of $\chi(\R^n) \ge n + 2$ was established by \citet{raiski70}. The three-dimensional case has received particular attention: \citet{nechushtan2002space} proved $\chi(\R^3) \ge 6$ using a graph on approximately 400 vertices, which \citet{deGrey2020small} later refined to a 59-vertex construction. Regarding upper bounds, \citet{coulson200215} famously showed $\chi(\R^3) \le 15$, a bound that \citet{soifer2009mathematical} conjectures to be tight.
Our approach successfully found colorings using 15 colors as well as, applying the techniques from the first variant, a 14-coloring covering all but 3.4622\% of $\R^3$, see also \cref{sec:coloring-space-numerical}.

\paragraph{Variant 4: Avoiding triangles.}

Finally, one can ask how many colors are needed to avoid monochromatic triples of points each at unit distances to each other. The answer in this case is actually quite simple: coloring the plane with alternating parallel and half-open stripes of width $\sqrt{3}/2$ using two colors, taking care to include the correct parts of the border of each strip in the color, avoids any monochromatic such triangle. One may again consider a natural off-diagonal version of this problem in which the triples form triangles with prescribed side lengths $1$, $a$, and $b$ for some $0 \le a, b \le 1$ satisfying $a + b > 1$. Note that the (degenerate) case $a = 0$ and $b = 1$ corresponds to the original \gls{hn} problem. A conjecture of  Erd\H{o}s, Graham, Montgomery, Rothschild, Spencer, and Straus~\cite{graham_problem58} states that three colors should always be sufficient, though so far little progress has been made towards that conjecture with some cases, where $a + b$ is close to $1$, still requiring seven colors, see \citet{aichholzer2019triangles}. \citet{Currier_Moore_Yip_2024} recently also proved that any two-coloring contains three-term arithmetic progressions, which can be viewed as a degenerate triangle with $a = b = 0.5$.  Using our approach, we were able to extend the previous best bounds in several regions, see \cref{fig:triangle_spectrum} and \cref{sec:triangles-numerical}.

\begin{figure}[h]
    \centering
    \resizebox{0.85\columnwidth}{!}{
        \begin{tikzpicture}
    \draw[thick]  (10,0) arc (0:60:10cm) {};
    \draw[thick]  (0,0) arc (180:120:10cm) {};
    \draw[thick] (0,0)--(10,0);

    \draw[thick, fill={rgb,255:red,160; green,196; blue,255}] (9.9,0) arc (0:8:5cm) --(9.979,0.421)  arc (2.5:40:10cm) -- (5,0);
    \draw[thick, fill={rgb,255:red,160; green,196; blue,255}] (0.1,0) arc (180:140:10cm) -- (5,0);

    \draw[thick, fill={rgb,255:red,255; green,173; blue,173}] (8.388,2.009) -- (8.786,1.873) --(9.037,1.733) -- (9.291,1.549) -- (9.773,1.173) -- (9.936,1.091) arc (6.5:40:10cm);
    \draw[thick, fill={rgb,255:red,255; green,173; blue,173}] (1.62,2.15) -- (0.32,2.49) arc (165:130:10cm);

    \draw[thick, fill={rgb,255:red,202; green,255; blue,191}] (5,6) -- (5,0) -- (8.66,0) arc (0:16:8.66cm) (8.3,2.4) -- (8.618,2.258) -- (9.147,1.925) -- (9.356,1.838) -- (9.365,1.755) -- (9.821,1.342) -- (9.921,1.238)  arc (7:60:10cm) -- (5,6);
    \draw[thick, fill={rgb,255:red,202; green,255; blue,191}] (5,6) -- (5,0) -- (1.34,0) arc (180:160.5:8.66cm) -- (0.58,3.34) arc (160.1:140:10cm){};

    \draw[thick, fill={rgb,255:red,253; green,255; blue,182}] (5.009,2.93)-- (5.793,3.35) -- (6.258,3.343) -- (6.06,3.519) -- (7.5,4.32) arc (120:104.5:5cm) -- (8.635,5.033)  arc (30:60:10cm) -- (5.009,2.93);
    \draw[thick, fill={rgb,255:red,253; green,255; blue,182}] (5.009,2.93)--(2.5,4.32) arc (60:75.5:5cm) --(1.35,4.999) (1.35,4.999) arc (150:120:10cm) -- (5,2.89);
    \draw[thick, fill={rgb,255:red,253; green,255; blue,182}] (9.6,2.76) arc (65:80:5cm) arc (171:149.5:4.1cm) arc (27.5:16.1:10cm);
    \draw[thick, fill={rgb,255:red,253; green,255; blue,182}]  (9.65,2.60) -- (9.46,2.37) -- (9.77,2.12) -- cycle;

    \node[font=\LARGE] at (-0.2,-0.2) {$A$};
    \node[font=\LARGE] at (10.2,-0.2) {$B$};
    \node[font=\LARGE] at (5.3,8.9) {$C$};

    \node at (2,8.9) {\LARGE previous};
    \node at (8,8.9) {\LARGE ours};

    \draw[line width=2pt] (5,-0.5)--(5,9.1);

    \node[fill={rgb,255:red,253; green,255; blue,182}, draw, scale=1.5] at (10,7.6) {};
    \node[font=\LARGE] at (11.5,7.6) {3 colors};
    \node[fill={rgb,255:red,202; green,255; blue,191}, draw, scale=1.5] at (10,6.7) {};
    \node[font=\LARGE] at (11.5,6.7) {4 colors};
    \node[fill={rgb,255:red,255; green,173; blue,173}, draw, scale=1.5] at (10,5.8) {};
    \node[font=\LARGE] at (11.5,5.8) {5 colors};
    \node[fill={rgb,255:red,160; green,196; blue,255}, draw, scale=1.5] at (10,4.9) {};
    \node[font=\LARGE] at (11.5,4.9) {6 colors};
\end{tikzpicture}
    }
    \caption{Location of the  vertex $C$ of triangles $ABC$ with side $AB$ of length $1$ for which a coloring avoiding monochromatic copies of such a triangle is known to exist for between three and six colors. The previously known results due to \citet{aichholzer2019triangles} are shown on the left and the results including our improvements on the right. Note that Figure 7 in \cite{aichholzer2019triangles} contained an error that significantly under-reported the size of the four-color region and that has been corrected here.}\label{fig:triangle_spectrum}
\end{figure}
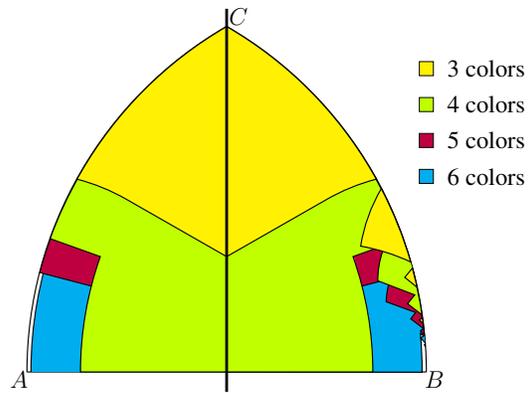

Many other variants of the \gls{hn} problem exist, including generalizations to other metric spaces, colorings avoiding arbitrary unit-distance graphs, or colorings avoiding several distinct differences in each color. Our framework should be largely applicable to these variants with minimal modifications, providing a systematic tool for exploring potential constructions.

\section{Methodology} \label{sec:framework}

In this section we develop the formulation of the \gls{hn} problem as a continuous optimization problem using probabilistic colorings.
The original problem asks whether there exists a function $g\colon \R^2 \to \{1, \ldots, c\}$ such that
\begin{equation}\label{eq:hn-eq}
    g(x) \ne g(y) \quad \forall x,y \in \R^2 \text{ with } \norm{x-y}_2 = 1.
\end{equation}
We relax the constraints by replacing the discrete color assignment with a probabilistic one. Instead of assigning a fixed color to each point, we assign a probability distribution over the $c$ colors, represented by a mapping $p \colon \R^2 \to \Delta_c$, where $\Delta_c$ is the $c$-dimensional probability simplex. The inner product $p(x)^T \, p(y)$ then naturally captures the probability that points $x$ and $y$ receive the same color when independently sampling from their respective distributions.

Given such a probabilistic coloring $p$, we define a loss function measuring the violation of the unit distance constraint in $[-R,R]^2$ through
\begin{equation}\label{eq:prob-loss}
    \mathcal{L}_R (p) = \int\limits_{[-R, R]^2} \int\limits_{\partial B_1(x)} p(x)^T p(y) \dd{\nu(y)} \dd{\mu(x)},
\end{equation}
where $\partial B_r(x) = \set{y \in \R^2}{\norm{x-y} = r}$ denotes the euclidean sphere of radius $r$ around $x$ and $\nu = \Unif{\partial B_1(x)}$ as well as $\mu = \Unif{[-R, R]^2}$ are uniform distributions over $\partial B_1(x)$ and $[-R,R]^2$, respectively. The inner integral represents the probability of a conflict between a fixed point $x$ and a randomly sampled point $y$ at unit distance from it, while the outer integral averages this over all points in $[-R, R]^2$. While we are ultimately interested in this loss as $R$ tends to infinity, in practice we study it for some fixed (sufficiently large) value of $R$ and aim to solve
\begin{equation}\label{eq:opt-problem}
\argmin_{p} \mathcal{L}_R (p).
\end{equation}
A coloring of a finite box $[-R, R]^2$ does not necessarily extend to a valid coloring of the entire plane $\R^2$. However, our objective is to find patterns within this box that can be extended across the plane.\footnote{Instead of sampling $x$ uniformly from a finite box, one could sample from a Gaussian distribution with variance corresponding roughly to the box size $R$. In limited tests, this alternative did not yield significantly different or improved constructions.}

Clearly, any valid discrete coloring $g$ of the plane can be converted to a probabilistic coloring $p$ through one-hot-encoding, yielding $\mathcal{L}_R (p) = 0$ for all $R > 0$. In the other direction, we have the following simple result.

\begin{proposition}\label{prop:hn-equiv}
    If $R > 0$ and $p$ is a probabilistic coloring with $\mathcal{L}_R(p) = 0$, then the coloring given by $g(x) = \argmax \big( p(x) \big)$ satisfies \cref{eq:hn-eq} for almost all pairs $\{(x, y) \in [-R, R]^2 \times \R^2 \mid \norm{x - y} = 1 \}$.
\end{proposition}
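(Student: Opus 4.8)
The plan is to exploit the pointwise nonnegativity of the integrand appearing in \cref{eq:prob-loss}: since its integral vanishes, the integrand must vanish almost everywhere, and the coloring statement then follows from an elementary pointwise argument.

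First I would write $\mathcal{L}_R(p) = \int_{[-R,R]^2} F(x)\dd{\mu(x)}$ with $F(x) := \int_{\partial B_1(x)} p(x)^T p(y)\dd{\nu(y)}$, and note that $p(x)^T p(y) = \sum_{i=1}^{c} p(x)_i\, p(y)_i \ge 0$, since $p(x), p(y) \in \Delta_c$ have nonnegative coordinates. Hence $F \ge 0$, and $\int F \dd{\mu} = 0$ forces $F(x) = 0$ for $\mu$-almost every $x \in [-R,R]^2$; for each such $x$, the nonnegative inner integrand in turn forces $p(x)^T p(y) = 0$ for $\nu$-almost every $y \in \partial B_1(x)$. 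Parametrizing the admissible pairs by $(x,\theta) \mapsto \big(x,\, x + (\cos\theta, \sin\theta)\big)$ on $[-R,R]^2 \times [0,2\pi)$, a short Jacobian computation shows that $\mu$ together with the uniform measures on the unit circles equals a positive constant multiple of the natural $3$-dimensional surface measure on $\{(x,y) \in [-R,R]^2 \times \R^2 \mid \norm{x-y} = 1\}$; so the previous sentence says precisely that $p(x)^T p(y) = 0$ for almost all pairs in the sense of the statement.

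Next I would fix any ``good'' pair $(x,y)$, i.e.\ one with $p(x)^T p(y) = 0$. Since the summands $p(x)_i\, p(y)_i$ are nonnegative and sum to zero, each one vanishes. Put $j = g(x) = \argmax\big(p(x)\big)$, using any fixed selection rule when the maximizer is not unique -- the argument only uses that the chosen coordinate is large. As $p(x)$ is a probability vector over $c$ coordinates, $p(x)_j \ge 1/c > 0$, so $p(x)_j\, p(y)_j = 0$ gives $p(y)_j = 0$. Applying the same bound to $y$ yields $p(y)_{g(y)} \ge 1/c > 0$, and therefore $g(y) \ne j = g(x)$; that is, \cref{eq:hn-eq} holds for $(x,y)$. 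Since the good pairs form a set of full measure, the proposition follows.

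The only genuinely delicate point is the measure-theoretic identification in the second step -- checking, via Tonelli together with the change of variables above, that ``$p(x)^T p(y)$ vanishes $\mu$-a.e.\ in $x$ and $\nu$-a.e.\ in $y$'' is equivalent to ``$p(x)^T p(y)$ vanishes for almost all pairs'' with respect to the natural measure implicit in the statement. Everything after that is a one-line pointwise computation, so I expect no further difficulty.
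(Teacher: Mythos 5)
Your proposal is correct and follows essentially the same route as the paper's proof: nonnegativity of the integrand forces it to vanish almost everywhere, and a Fubini/Tonelli-type identification transfers this to almost all unit-distance pairs. You additionally spell out the final pointwise step (that $p(x)^T p(y)=0$ together with $p(x)_{g(x)}\ge 1/c$ and $p(y)_{g(y)}\ge 1/c$ forces $g(x)\ne g(y)$), which the paper leaves implicit; this is a welcome clarification rather than a divergence.
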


\begin{proof}
    Let $X \subseteq [-R, R]^2$ denote the set of points $x$ for which the inner integral $\int_{\partial B_1(x)} p(x)^T p(y) dy$ is non-zero and for any given $x \in [-R, R]^2$ let $Y(x) \subseteq \partial B_1(x)$ denote the set of points $y$ for which the support of $p(y)$ is not disjoint from $p(x)$, i.e., the points $y$ with $p(x)^T p(y) > 0$. 
    By basic properties of the Lebesgue measure and since $p(x)^T \, p(y)$ in the definition of $\mathcal L_R(g)$ is non-negative as a dot product of probability vectors, $X$ has measure zero in $[-R, R]^2$ and so does $Y(x)$ in $\partial B_1(x)$ for any $x \in [-R, R]^2 \setminus X$. It follows that $\{(x,y) \mid x \in X, y \in \partial B_1(x)\} \cup \{(x,y) \mid x \in [-R, R]^2 \setminus X, \, y \in Y(x) \}$,  likewise has measure zero with respect to the uniform distribution over $\{(x, y) \in [-R, R]^2 \times \R^2 \mid \norm{x - y} = 1 \}$.
\end{proof}

The minimization of \cref{eq:prob-loss} over probabilistic colorings therefore provides a path to finding colorings for the original \gls{hn} problem through Proposition \ref{prop:hn-equiv}, assuming a suitably parameterized family of functions that ideally can approximate arbitrary colorings.

\subsection{Parameterizing the space of probabilistic colorings}
We approximate probabilistic colorings $p$ through \glspl{nn} $p_\theta$ of fixed architecture and size with trainable parameters $\theta$, turning \cref{eq:opt-problem} into the finite-dimensional optimization problem $\argmin_{\theta} \mathcal{L}_R (p_\theta)$. By virtue of the universal approximation property \citep{cybenko1989approximation} and their inherent spectral bias towards low-frequency functions \citep{Rahaman_Baratin_Arpit_Draxler_Lin_Hamprecht_Bengio_Courville_2019}, \glspl{nn} provide both the expressiveness needed to capture complex spatial patterns and a natural tendency towards structured, interpretable solutions.

We employ a standard multilayer perceptron which encodes geometric structure by operating on coordinates, making this a geometric learning approach despite its simple architecture.
Following best practices for implicit neural representations \citep{sitzmann2020implicit}, we implement $p_{\theta}$ with sine activation functions. The networks we used consisted of two to four hidden linear layers with $32$ to $256$ neurons each, directly mapping coordinates to color distributions through a softmax activation after the final layer.

\subsection{Batched gradient descent}
To minimize the loss function, we employ gradient descent on the \gls{nn} parameters $\theta$. The main challenge lies in approximating $\nabla_\theta \mathcal{L}_R(p_\theta)$, which involves nested integrals that we approximate through Monte Carlo sampling. Specifically, we sample $n$ center points $x_i \sim \Unif{[-R, R]^2}$ and, for each center $x_i$, sample $m$ points $y_{ij} \sim \Unif{\partial B_1(x_i)}$, obtaining the estimate
\begin{equation}
\nabla_\theta \mathcal{L}_R(p_\theta) \approx \nabla_\theta \left[\frac{1}{n m} \sum_{i=1}^{n} \sum_{j=1}^{m} p_\theta(x_i)^T \, p_\theta(y_{ij})\right].
\end{equation}

\subsection{Considerations for each of the four variants}\label{sec:variants}
Let us outline how the formulation in \cref{eq:prob-loss} can be adapted to the four variants described in \Cref{sec:hadwiger-nelson}, including necessary adjustments to the \gls{nn} and sampling procedures already described.

\paragraph{Variant 1: Almost coloring the plane.}
To formalize the idea of an \emph{almost coloring}, we introduce an additional color, in which no conflicts are penalized. We extend the co-domain of $p_\theta$ to $\Delta_{c+1}$, with the last entry corresponding to the additional color. Formally, we then want to solve the constrained optimization problem
\begin{align*}
    &\argmin_{p_\theta} \mathcal{L}_R (p_\theta^{c}) \\ \text{s.t.} &\quad \int\limits_{[-R, R]^2} p_\theta(x)_{c+1} \dd{\mu(x)} \le \delta 
\end{align*}
for some pre-determined $\delta > 0$, where $p_\theta^{c}$ refers to the first $c$ components of $p_\theta$. More precisely, we want to determine the minimum $\delta$ for which the answer to this problem is zero. In practice, we solve the Lagrangian relaxation
\begin{equation}\label{eq:lagrange-loss}
    \mathcal{L}_R^\lambda (p_{\theta}) =  \mathcal{L}_R (p_{\theta}^c) + \lambda \int\limits_{[-R, R]^2} p_{\theta}(x)_{c+1} \dd{\mu(x)}.
\end{equation}
While in theory there exists a corresponding $\lambda$ for each $\delta$, we simply treat $\lambda$ as a hyperparameter controlling the trade-off between using the additional color and maintaining valid colorings.

Almost colorings contain some inherent freedom in the the solution space, as we can simply increase the size of the additional color by a small amount if needed, so we designed a fully automated pipeline to obtain rigorously verified colorings. After identifying periodic structures in an initial trained \gls{nn} in the form of a tesselation using a parallelogram, we retrain a separate \gls{nn} to enforce precisely that periodicity. We then discretize the output and eliminate any unit-distance conflicts through an iterative discrete heuristic. The periodic extension of this discrete solution yields a continuous almost coloring that provably satisfies all unit-distance constraints, see also \cref{alg:discrete-almost-coloring} and \cref{fig:almost-5-coloring}.

\begin{algorithm}[ht]
\caption{Automated almost‐coloring formalization}
\label{alg:discrete-almost-coloring}
\begin{enumerate}
  \item \textbf{Initial training.}
    Train $p_\theta : \mathbb{R}^2 \to \Delta_{c+1}$ to minimize \cref{eq:lagrange-loss} on a large enough box $[-R, R]^2$.

  \item \textbf{Periodicity extraction.}
    Determine two vectors $v_1,v_2\in\mathbb{R}^2$ with $0 \ll \angle(v_1,v_2) \ll \pi$ such that the coloring (largely) consists of tiling the parallelogram
    \[
      \mathcal{P} = \{\alpha v_1 + \beta v_2 : \alpha,\beta\in[0,1)\}
    \]
    along the lattice $\Lambda = \{\,n_1 v_1 + n_2 v_2 : n_1,n_2\in\mathbb{Z}\}$.

  \item \textbf{Periodicity‐constrained retraining.}
    Form the invertible change‐of‐basis matrix $M = [\,v_1\;v_2\,]\in\mathbb{R}^{2\times 2}$.
    Prepend the mapping $x \mapsto M^{-1}x\pmod{1}$ to $p_\theta$, which enforces exact periodicity over $\Lambda$, and retrain.

  \item \textbf{Discrete almost‐coloring.}
    Discretize $\mathcal{P}$ into $kl$ copies of $\{\alpha v_1 / k + \beta v_2 / l : \alpha,\beta\in[0,1)\}$ and determine a color for each parallelogram pixel by sampling $p_\theta$ at its respective center.

\item \textbf{Iteratively fix remaining conflicts.}
    Determine a discrete mask in which conflicts need to be avoided around each parallelogram pixel to obtain a formal coloring. Iteratively reduce any remaining conflicts by solving an auxiliary minimum edge cover problem and recoloring some parallelograms. After a fixed number of rounds, resolve any remaining conflicts by recoloring with the additional color $c+1$.
\end{enumerate}
\end{algorithm}

\paragraph{Variant 2: Avoiding different distances.} For colorings of type $(d_1, \ldots, d_c)$ we modify the loss formulation to
\begin{equation}\label{eq:off-diag-loss}
    \sum_{k=1}^c\, \int\limits_{[-R, R]^2} \int\limits_{\partial B_{d_k}(x)} p_{\theta}(x)_k  \, p_{\theta}(y)_k \dd{\nu_k(y)} \dd{\mu(x)},
\end{equation}
with $\mu = \Unif{[-R, R]^2}$ and $\nu_k = \Unif{B_{d_k}(x)}$, i.e., we now need to choose the points $y$ from the sphere $\partial B_{d_k}(x)$ of radius $d_k$ around $x$ for each color $k$.
Minimizing the loss function finds probabilistic colorings that attempt to avoid distance $d_k$ for color $k$.
Note that we recover \cref{{eq:prob-loss}} for $d_1 = \ldots = d_c = 1$.

This model can also be extended to handle sets or ranges of distances for some or all colors. To achieve this, we redefine the domain of the candidate functions $g$ to include the current distance. Given distance ranges $[d_k^{\text{min}}, d_k^{\text{max}}]$ for each color $k$ for example, we set
\begin{equation}\label{eq:variable-dist-param}
    p : \R^2 \times \bigtimes_{k = 1}^c [d_k^{\text{min}}, d_k^{\text{max}}] \to \Delta_c.
\end{equation}
By integrating \cref{eq:off-diag-loss} over this expanded domain, we average the loss across all distances within the specified range for each color by sampling points on $c$ circles, corresponding do the distances $d_1, \ldots, d_c$. Note that this increases the effective batch size from $n \, m$ to $n \, m \, c$.  This approach allows exploration of a larger part of the solution space instead of just a single type.

\paragraph{Variant 3: Coloring $n$-dimensional space.}
To extend to higher dimensions, it suffices to integrate over $\R^n$ and letting $\partial B_1(x)$ represent the sphere in the appropriate space. We are particularly interested in the three-dimensional case. Apart from adapting the input dimension of $g_{\theta}$ as well as sampling in the higher dimensional space, this case is analogous to the two dimensional cases. The question of almost colorings arises naturally in higher dimensions as well. Here, the automated verification procedure described in \cref{alg:discrete-almost-coloring} is particularly useful, as visualization and interpretation of colorings gets much more challenging in higher dimensions.

\paragraph{Variant 4: Avoiding triangles.} 

In order to avoid monochromatic triangles with side lengths $1$, $a$, and $b$, we replace the integrand $p_{\theta}(x)_k \,  p_{\theta}(y)_k$ in \cref{eq:prob-loss} with 
\begin{align}\label{eq:triangle-loss}
    \sum_{k=1}^c p_{\theta}(x)_k \,  p_{\theta}(y)_k \, \big( p_\theta(z_1)_k + p_\theta(z_2)_k \big) / 2
\end{align}
where $z_1$ and $z_2$ are the two candidates for the the point $z$ such that $x$, $y$, and $z$ form a triangle with $\norm{x-y} = 1$, $\norm{x-z} = a$ and $\norm{y-z} = b$, assuming $a + b > 1$.

In this variant we sample center points $x \in [-R, R]^2$ and proximity points $y$ on the unit circle as before. We then obtain the two candidates $z_1$ and $z_2$ for the third point of the triangle by calculating the intersection points of the circles of radius $a$ and $b$ around $x$ and $y$, respectively. As in Variant 2, the domain of the function $p$ can be extended to $\R^2 \times [a_{\text{min}}, a_{\text{max}}] \times [b_{\text{min}}, b_{\text{max}}]$ to allow for continuous ranges of $a$ and $b$ within one network and we integrate the loss over the intervals $[a_{\text{min}}, a_{\text{max}}]$ and $[b_{\text{min}}, b_{\text{max}}]$.

\begin{figure}[h]
    \centering
    \includegraphics[width=0.485\textwidth]{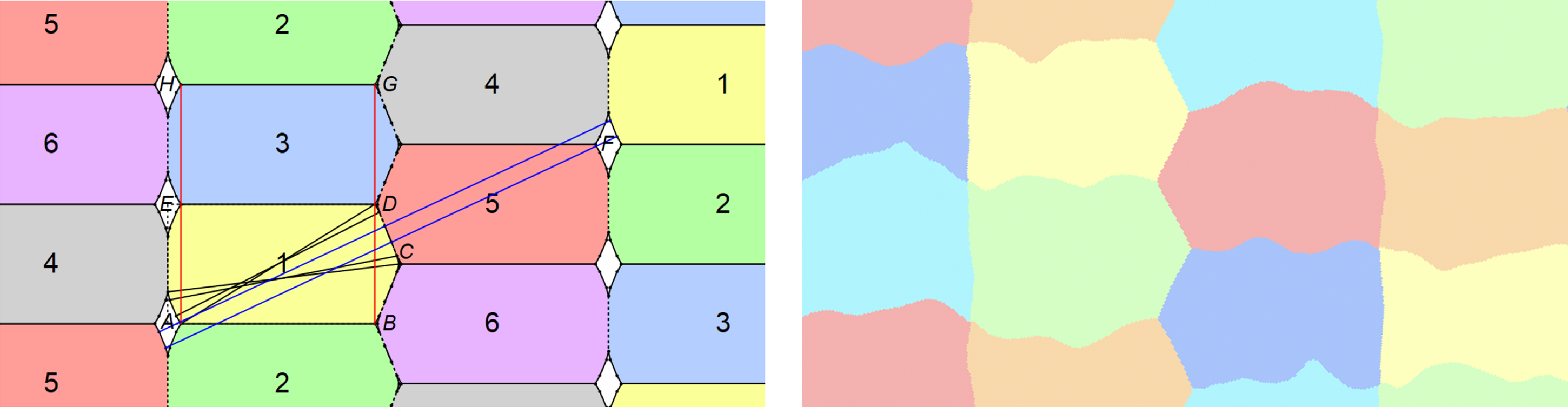}
    \caption{The almost coloring due to \citet{pritikin1998all} and \citet{parts2020percent} (left) and the coloring suggest by our approach (right). The wavy lines are caused by the inherent degrees of freedom present in many colorings.}\label{fig:pritikin_nn}
\end{figure}

\section{Experimental results}\label{sec:numerical-results}

In this section we present our numerical results for the original \gls{hn} problem and its four variants. We implemented our approach in PyTorch, updating the parameters using the Adam optimizer \citep{kingma2014adam} with a learning rate schedule linearly decaying~\citep{li2019budgeted} from an initial value between $10^{-3}$ and $10^{-4}$.
We typically used between 16,384 and 65,536 training steps, sampling 1,024 to 4,096 points for $x$ and 1 to 32 points for $y$ on the circle(s) around $x$ during each step, resulting in $10^6$ to $10^{10}$ total samples during training. A single such run takes between two and 20 minutes on an NVIDIA V100 GPU, though adequate results can be achieved in under 10 minutes on a standard laptop CPU. Due to the heuristic nature of our approach and the need to tune hyperparameters, we did however perform thousands of runs for each variant, particularly when exploring parameter spaces in the off-diagonal and triangle variants.

Our initial experiments focused on six-color solutions in the classical setting.
The method consistently recovered patterns resembling Pritikin's construction, providing early validation of our approach, see \cref{fig:pritikin_nn}. However, these results also revealed that \glspl{nn} often introduce irregularities such as wavy lines, reflecting inherent freedoms in the solution space. This becomes even more pronounced for seven colors, see \cref{fig:7-coloring} in the appendix. These findings highlight that our method can successfully discover valid colorings, but formalizing the discovered patterns into rigorous mathematical constructions may still require substantial additional effort. Additionally, the method's inability to suggest valid colorings with fewer than seven colors provides additional numerical evidence that the chromatic number of the plane may in fact be seven. The code needed to reproduce our results and the constructions obtained using \cref{alg:discrete-almost-coloring} are available at \href{https://github.com/ZIB-IOL/neural-discovery-icml25}{https://github.com/ZIB-IOL/neural-discovery-icml25}.

\begin{table}[htbp]
    \caption{We compare our results with the best known for almost coloring the plane, reporting the fraction of points that must be removed to color the remainder without unit-distance conflicts. \emph{Prior} results are due to  \citet{croft1967incidence} and \citet{parts2020percent}. \emph{Numerical} values were obtained by evaluating a trained model on a parallelogram with periodical boundary conditions. The \emph{formalized} values were obtained by applying \cref{alg:discrete-almost-coloring}.}
    \label{tab:almost-coloring-results}
    \centering
    \resizebox{\columnwidth}{!}{
    \begin{tabular}{lrrrrrr} \toprule \textbf{\# colors} & \textbf{1} & \textbf{2} & \textbf{3} & \textbf{4} & \textbf{5} & \textbf{6} \\ \midrule[0.08em] \textbf{prior} & 77.04\% & 54.13\% & 31.20\% & 8.25\% & 4.01\% & 0.02\% \\ \midrule \textbf{numerics} & 77.07\% & 54.21\% & 31.34\% & 8.29\% & \textbf{3.60\%} & 0.03\% \\ 
    \textbf{formalized} & 77.13\% & 54.29\% & 31.51\% & 8.52\% & \textbf{3.74\%} & 0.04\%  \\\bottomrule \end{tabular}
    }
\end{table}

\subsection{Variant 1: Almost coloring the plane} \label{sec:almost-coloring-numerical}
Applying the Lagrangian-modified loss function described in \cref{sec:variants} to six colors, we achieved conflict rates of around 0.03\%, slightly higher than those reported by \citet{parts2020percent}. The optimization process proved sensitive to the Lagrangian term $\lambda$, but consistently recovered known constructions once a good range (typically between $10^{-3}$ and $10^{-1}$) was found. 
The discovered patterns for one to four colors closely match known constructions due to \citet{croft1967incidence}, see \cref{fig:all-k-colorings} in the appendix. The five-color case exhibits patterns similar to those in the construction described by \citet{parts2020percent}, but the numerical value of around 3.60\% suggested the possibility of an improvement. The formalized construction does in fact attain a significantly improved value of 3.7356\%.
Complete numerical results for $k=1,\dots,6$ colors along with the formalized values obtained using \cref{alg:discrete-almost-coloring} are reported in \cref{tab:almost-coloring-results}.

\subsection{Variant 2: Avoiding different distances} \label{sec:offdiagonal-numerical}

We report the numerical results on colorings of type $(d_1, \dots, d_6)$ with a particular focus on the cases $d_i = 1$ for either $i \in \{1, \dots, 5\}$ or $i \in \{1, \dots, 4\}$.

\paragraph{Colorings of type $(1,1,1,1,1,d)$.}
Following \cref{sec:framework}, we included the free distance $d$ in the domain of the coloring function, exploring the interval $[0.2, 2.2]$. This approach not only allows optimization over a continuous range of distances but also reveals smooth transitions between colorings, see \cref{fig:one-dist-free-8-colorings} in the appendix, which proved crucial in identifying a broader family of valid solutions. \Cref{fig:1d-distance-vs-conflicts} shows the conflict rates as a function of $d$ for sixteen individual runs and their pointwise minimum. While previous work established valid colorings for $d \in [\sqrt{2} - 1, 1/\sqrt{5}]$ (orange dashed lines), our constructions extend this range to $[0.354, 0.657]$ (blue dashed lines). Additional regions of interest appear at $d \approx 1$, corresponding to the original \gls{hn} problem and $d \approx 1.6$, where conflict rates fall below 1\%.

\begin{figure}[ht]
    \centering
    \includegraphics[width=0.485\textwidth]{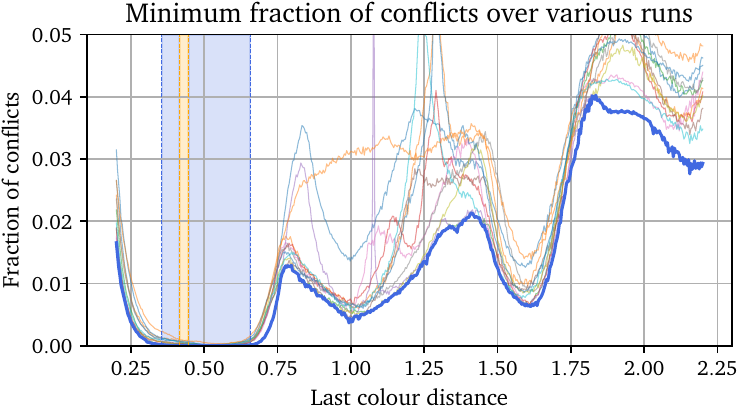}
    \caption{Fraction of conflicting points versus the free distance $d$ in colorings of type $(1,1,1,1,1,d)$. The thick blue line shows the minimum over multiple runs (thin lines). Orange and blue shaded regions indicate the previously known and our extended ranges of valid colorings, respectively.}\label{fig:1d-distance-vs-conflicts}
\end{figure}

\paragraph{Colorings of type $(1,1,1,1,d_1,d_2)$.}

Analogously, we treated the last two colors as free distances and augmented the domain of the \glspl{nn} accordingly. The resulting colorings remain continuous in both $d_1$ and $d_2$. As shown in \cref{fig:2d-min-heatmap}, we found three regions of minimal conflicts: two symmetric regions near $(d_1, 1)$ and $(1, d_2)$ with $d_1, d_2 \approx 0.5$, recovering our earlier $(1,1,1,1,1,d)$ colorings, and an additional region near $(0.5, 0.5)$ corresponding to a construction similar to Stechkin's, see \cref{fig:one-dist-free-8-colorings}.

\begin{figure}[h]
    \centering
    \includegraphics[width=0.48\textwidth]{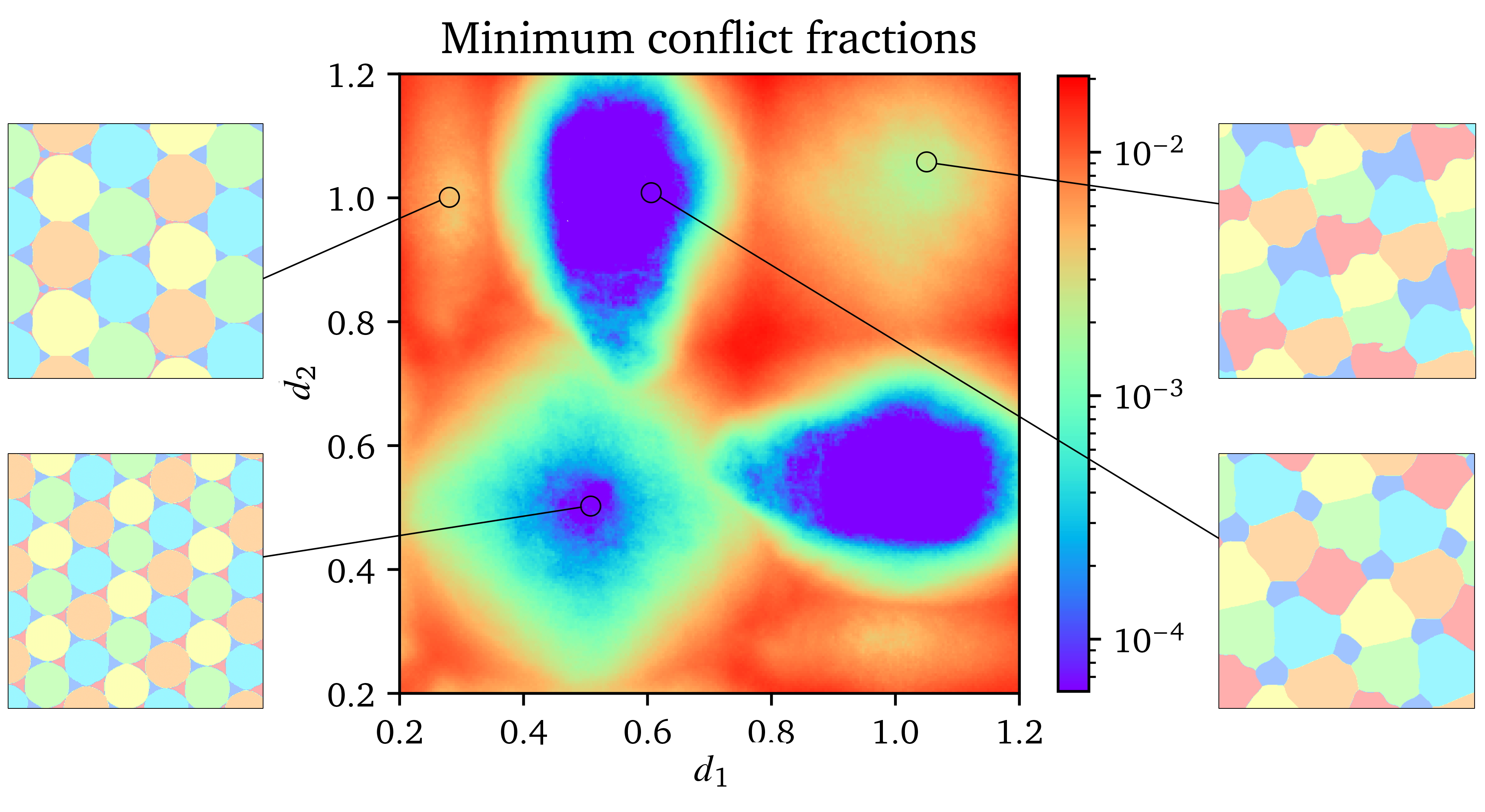}
    \caption{The fraction of conflicts in $(1,1,1,1,d_1,d_2)$ colorings as a function of $d_1$ and $d_2$ (center, showing minimum over sixteen networks). Three regions of minimal conflicts emerge: symmetric regions near $(d_1, 1)$ and $(1, d_2)$ with $d_1, d_2 \approx 0.5$, and one near $(0.5, 0.5)$. Sample colorings from these regions are shown in the surrounding plots.}\label{fig:2d-min-heatmap}
\end{figure}

\paragraph{Polychromatic Number.} We investigated potential five-color solutions of type $(d_1, \dots, d_5)$ by including the distances in the domain of the \gls{nn} and determining the best candidates for such a type after training using first- and second-order methods. Our result achieved a minimal conflict rate of 4.9\% with distances $d_1 = 1$, $d_2 \approx d_3 \approx 1$, and $d_4 \approx d_5 \approx 0.56$, see \cref{fig:polychromatic} in the appendix. Despite extensive optimization, our inability to find a conflict-free five-color solution provides additional evidence that the polychromatic number may indeed be six. More details are provided in \cref{app:polychromatic-number}.

\subsection{Variant 3: Coloring $n$-dimensional space} \label{sec:coloring-space-numerical}

Our method successfully found (near) conflict-free 15-color solutions, consistent with the bound established by \citet{coulson200215}. Our attempt to improve this bound directly to 14 was not successful. We also studied the almost coloring variant in three dimensions, which resulted in numerical solutions with a conflict rate of around 2.3\%. Applying a three-dimensional modification of \cref{alg:discrete-almost-coloring} yielded a formal coloring covering all but 3.4622\% of $\R^3$, though further refinements improving this value seem possible. More details are provided in \cref{app:3d-coloring}.

\subsection{Variant 4: Avoiding triangles} \label{sec:triangles-numerical}

We applied our approach to study triangles with sides of length $1$, $a$, and $b$ satisfying $a + b > 1$. For four and five colors, our \glspl{nn} consistently discovered constructions based on stripes and hexagonal tessellations, similar to those used by \citet{aichholzer2019triangles}. However, the numerical results suggested that alternative scalings of these basic patterns could yield improvements beyond those previously known, see the suggested patterns in \cref{fig:triangle_numerical}. We were able to formalize some of those based on stripes, obtaining several new bounds that expand the regions where three to five colors suffice, as already shown previously in \cref{fig:triangle_spectrum}. This significantly reduces the area of the parameter space still requiring seven colors, though further improvements seem possible.

Beyond further resolving the parameter space with respect to the number of required colors both numerically (\cref{fig:triangle_numerical}) as well as via formalized patterns (\cref{fig:triangle_spectrum}), the numerical results also provide evidence against the conjecture of \citet{graham_problem58} that three colors should always suffice: our experiments consistently required more colors as one side of the triangle became shorter while the other became closer to unit length, suggesting these cases (which more closely resemble the original \gls{hn} problem) may be fundamentally more challenging.

\begin{figure}[h]
    \centering
    \includegraphics[width=0.485\textwidth]{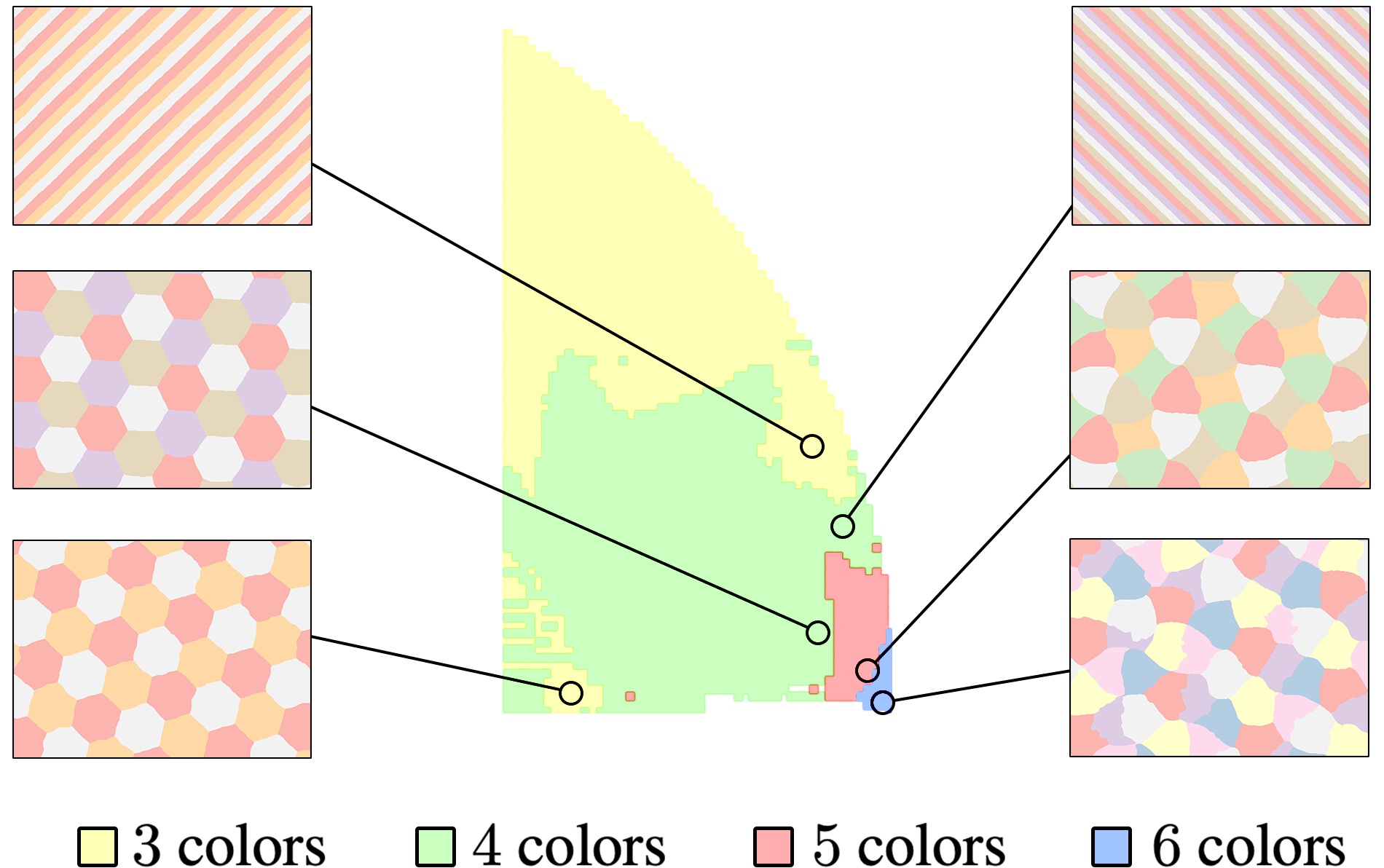}
     \caption{Numerical results showing the achievable color bounds when avoiding monochromatic triangles, together with a selection of found colorings. We trained thousands of networks on different sub-areas of the above region and say that a point can be achieved with three, four, five, or six colors if the top 3\% of runs for that point reported that less than 0.1\% of sampled triangles were monochromatic in the argmax coloring derived from the trained \glspl{nn}.
     }\label{fig:triangle_numerical}
\end{figure}

\section{Discussion} \label{sec:discussion}

Our framework provides a novel approach to exploring mathematical structures through continuous optimization, leading to several concrete improvements to long-standing open problems. Beyond just being applicable to variants of the \gls{hn} problem, we believe that it naturally extends to other problems in discrete geometry and extremal combinatorics. For instance, graph-theoretic problems involving discrete structures can often be reformulated through graph limits, where the objective becomes optimizing over continuous maps $[0,1]^2 \to [0,1]$. Our approach of using \glspl{nn} as universal function approximators combined with suitable loss functions could be directly applied in this setting. The framework could also potentially be extended to handle non-differentiable losses using auxiliary {\lq}critic{\rq} or {\lq}value{\rq} networks that learn to approximate these losses, similar to techniques used in adversarial and reinforcement learning.

We also experimented with other function families such as Fourier bases and Voronoi diagrams; however, \glspl{nn} consistently performed best, due to their expressiveness, ease of training, and ability to represent complex functions with relatively few parameters.
The approach required a substantial amount of tuning, particularly in the choice of architecture and hyperparameters. Beyond the usually relevant parameters such as the learning rate, the Lagrangian penalty term was particularly important: although scale-sensitive, it led to stable and consistent results once a good range was found. The size of the bounding box was less critical, as long as it was large enough to avoid trivial colorings.

We believe that the combination of \gls{ai}-aided constructions with human insight is highly beneficial, leveraging the strengths of automated exploration alongside human interpretation and mathematical intuition. We also emphasize that, in the case of the almost colorings, we developed a fully automated pipeline that no longer requires human intervention. This automation was already helpful in two dimensions, but became essential for higher-dimensional variants of the problem, where manual interpretation becomes increasingly challenging. More broadly, this work demonstrates how continuous relaxations and gradient-based optimization can bridge the gap between \gls{ml} and mathematical discovery, potentially opening new paths to approaching long-standing open problems in mathematics.

\clearpage

\section*{Acknowledgements}
Research reported in this paper was partially supported through the AI4Forest project, which is funded by the German Federal Ministry of Education and Research (BMBF; grant number 01IS23025A) and the Deutsche Forschungsgemeinschaft (DFG) through the DFG Cluster of Excellence MATH+ (grant number EXC-2046/1, project ID 390685689) as well as Sonderforschungsbereich (SFB) Transregio 154 (grant number 239904186). C.S. would also like thank Marijn Heule for his FoCM 2024 talk on \gls{hn} that inspired this research project and Sagdeev Arsenii for suggesting the triangle variant of this problem. The authors would also like to thank Tibor Szab\'o for suggesting sampling from a Gaussian distribution instead of from a finite box as well as the referees for many valuable suggestions.

\section*{Impact Statement}

This work demonstrates how \gls{ml} can drive scientific and mathematical discovery, using \glspl{nn} to explore solutions to the \gls{hn} problem and related combinatorial geometry challenges. Our approach bridges discrete and continuous mathematical domains, enabling researchers to explore problem spaces more broadly and deeply than traditional methods allow. By reformulating complex mathematical problems as optimization tasks, we provide tools for intuition-driven discovery that retain interpretability while accelerating progress. While the immediate focus is on theoretical mathematics, the broader applicability of \gls{ml} methods similar to our approach has the potential to transform diverse fields that require navigating hybrid discrete-continuous solution spaces, such as materials science, network optimization, and beyond.
The societal impact of this paradigm shift could significantly enhance our ability to address long-standing scientific and engineering problems.


\newpage
\appendix
\onecolumn

\section{Coloring the plane with seven colors}

\begin{figure}[h]
    \centering
    \includegraphics[width=0.8\textwidth]{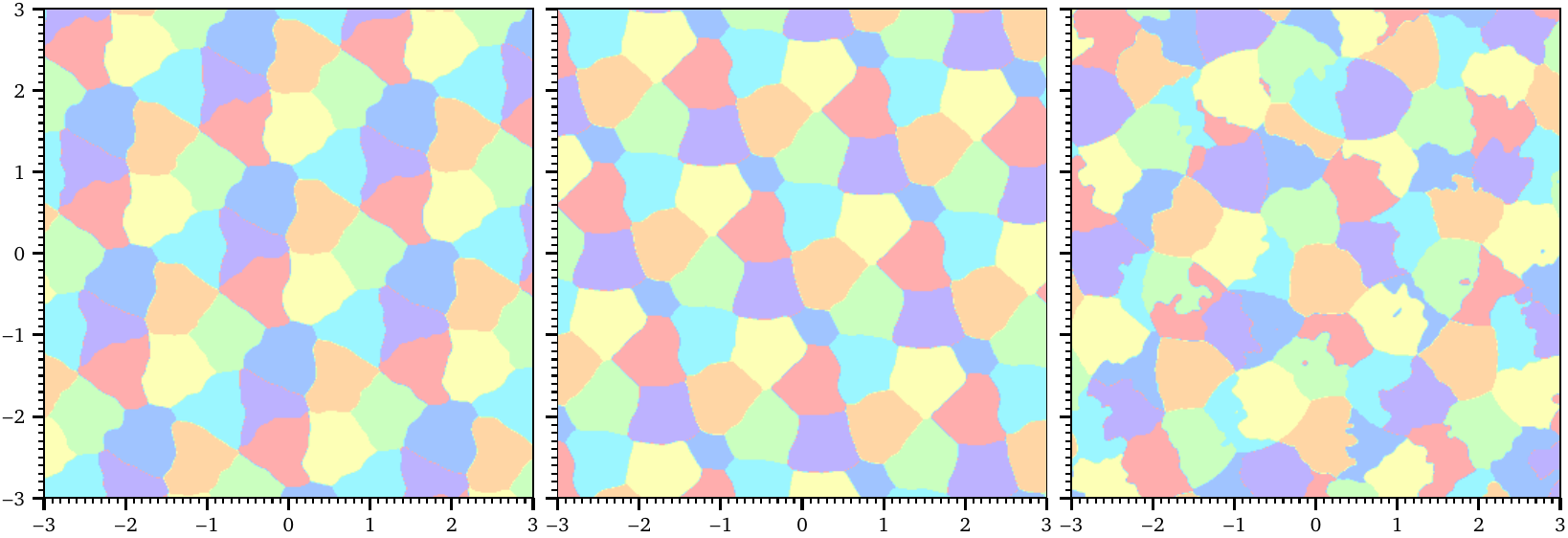}
    \caption{Colorings found for the Hadwiger-Nelson problem with seven colors. Left and middle: Rather structured colorings. Right: Less structured coloring.}\label{fig:7-coloring}
\end{figure}

\newpage

\section{Details on variant 1: Almost coloring the plane}\label{app:almost-coloring}

\begin{figure}[htbp]
    \centering

    \begin{subfigure}[t]{0.31\textwidth}
        \includegraphics[width=\textwidth]{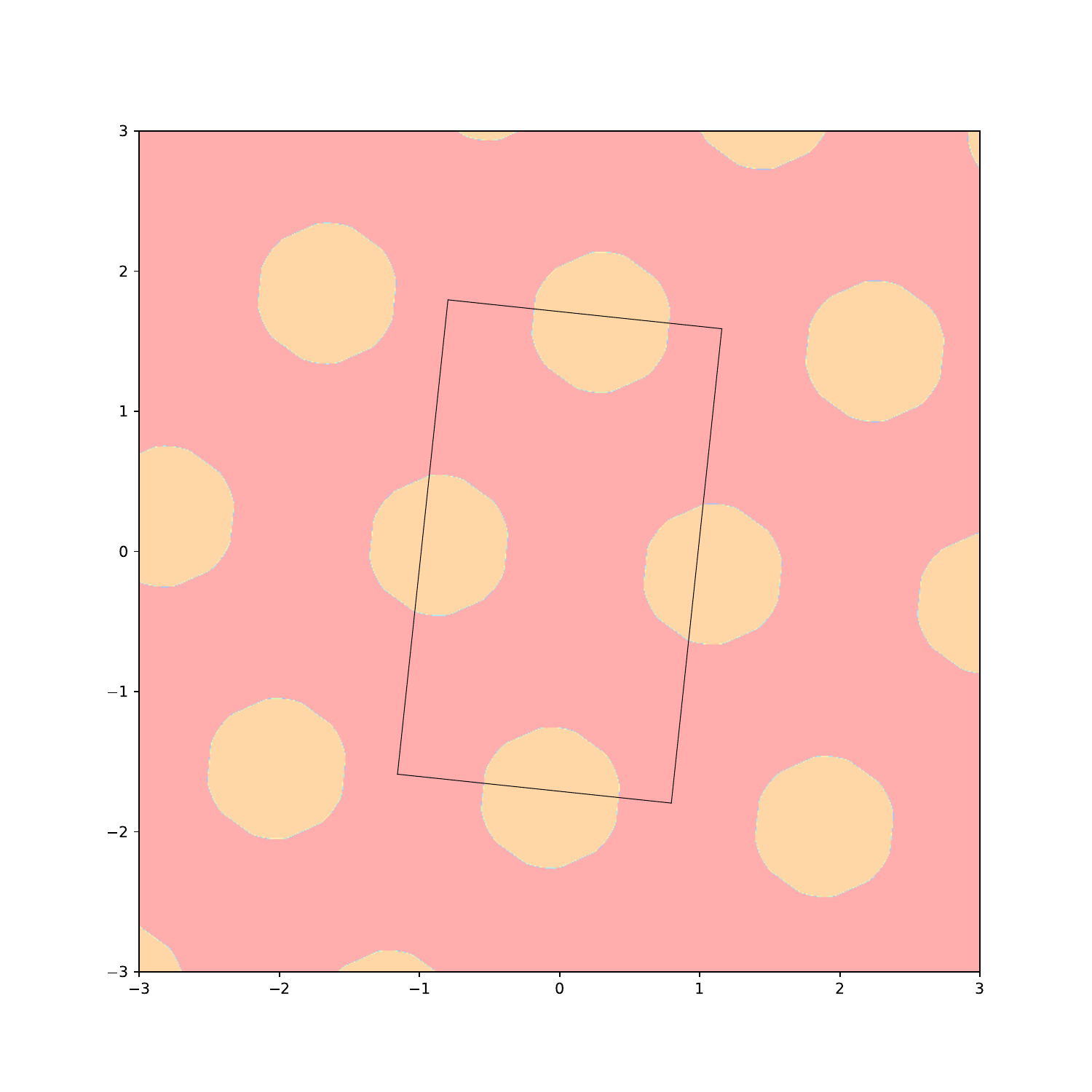}
        \caption{$k = 1$}
    \end{subfigure}
    \hspace{0.02\textwidth}
    \begin{subfigure}[t]{0.31\textwidth}
        \includegraphics[width=\textwidth]{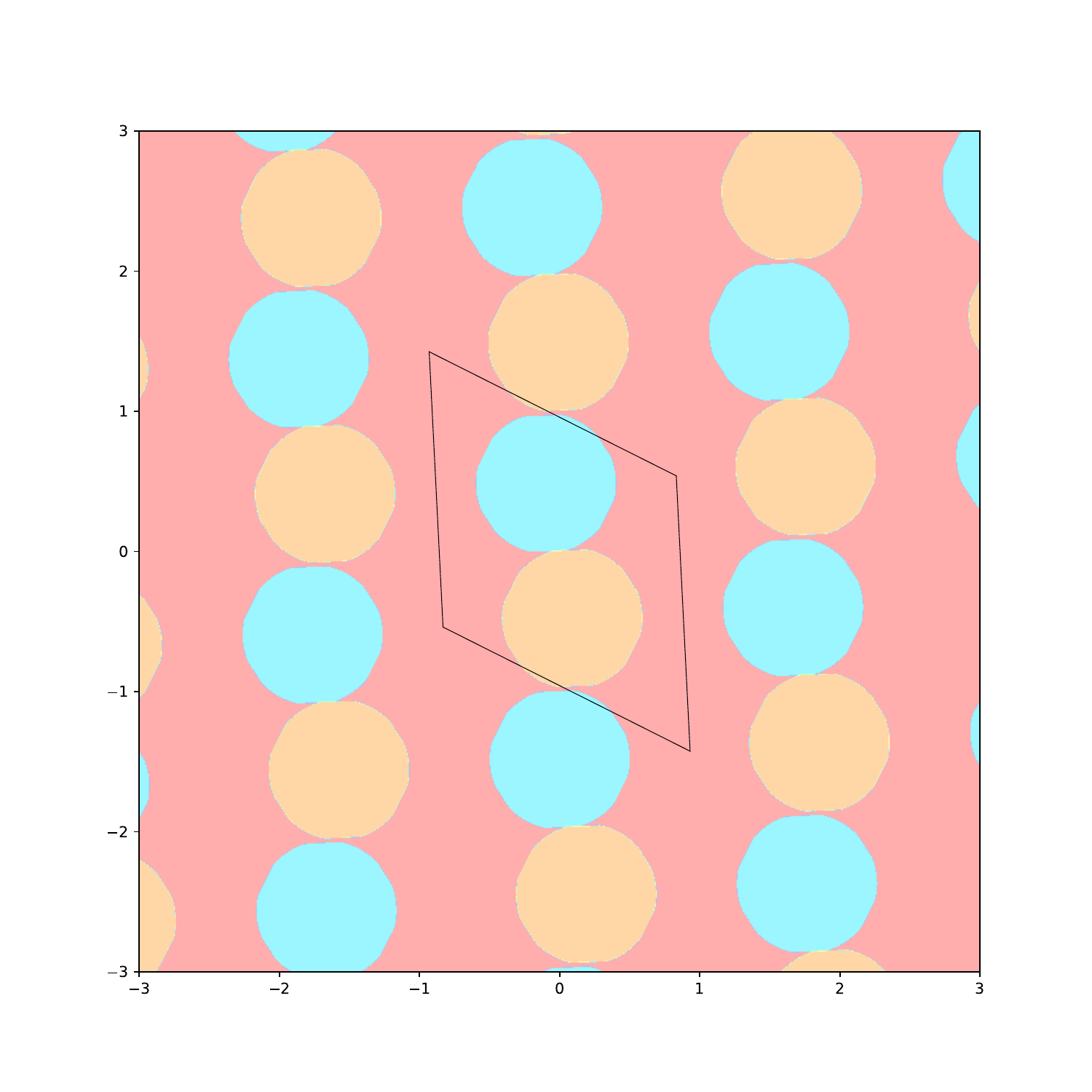}
        \caption{$k = 2$}
    \end{subfigure}
    \hspace{0.02\textwidth}
    \begin{subfigure}[t]{0.31\textwidth}
        \includegraphics[width=\textwidth]{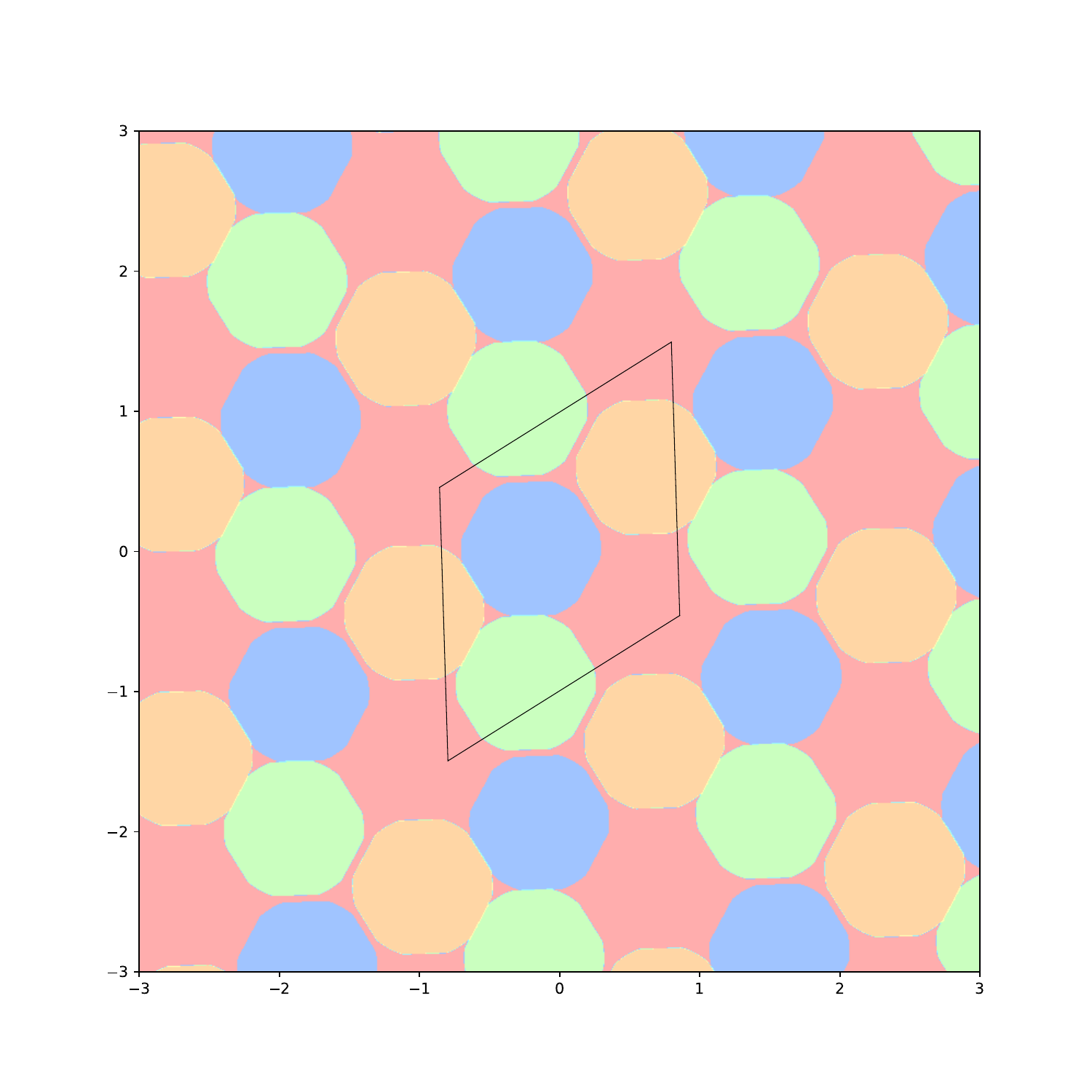}
        \caption{$k = 3$}
    \end{subfigure}

    \vspace{0.5em}

    \begin{subfigure}[t]{0.31\textwidth}
        \includegraphics[width=\textwidth]{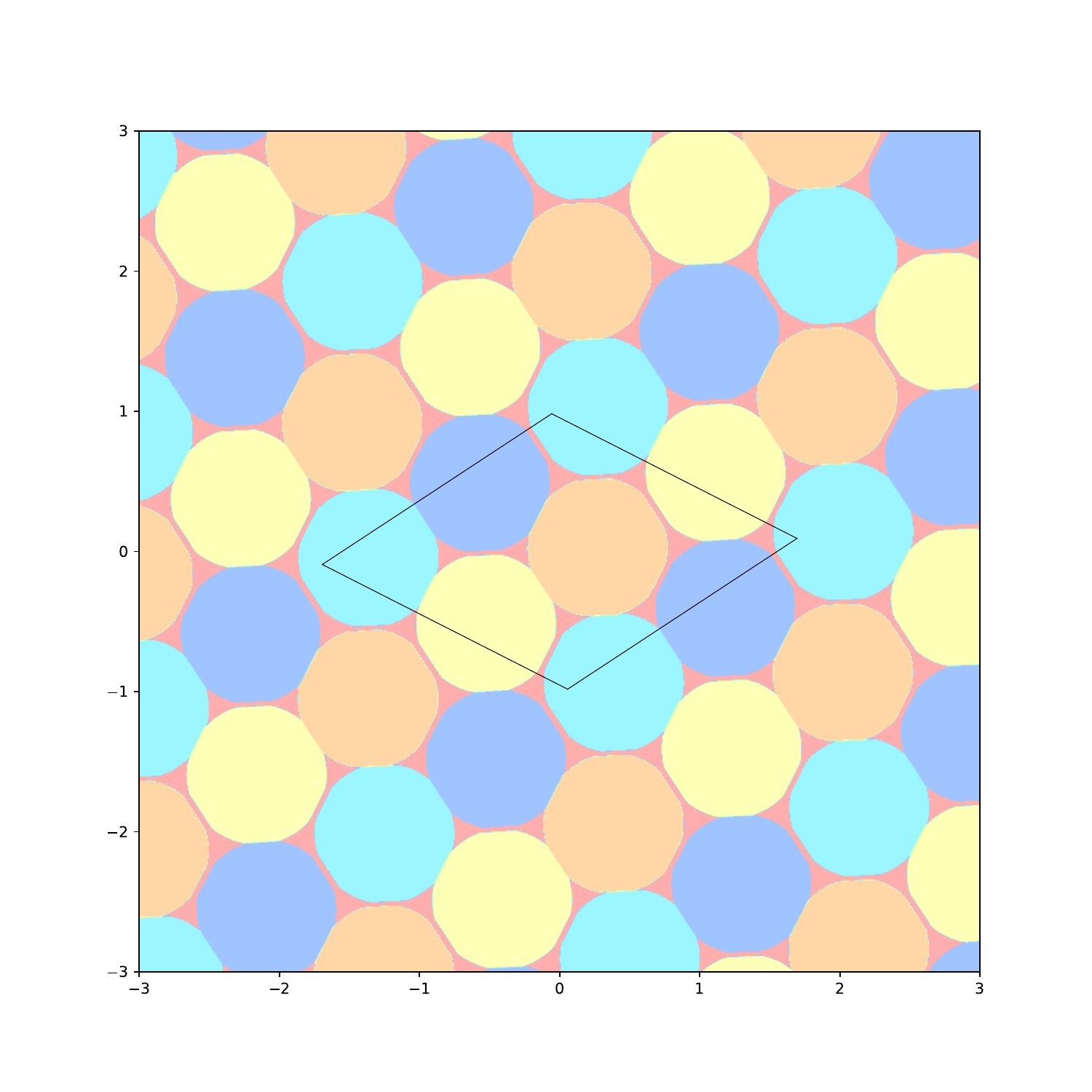}
        \caption{$k = 4$}
    \end{subfigure}
    \hspace{0.02\textwidth}
    \begin{subfigure}[t]{0.31\textwidth}
        \includegraphics[width=\textwidth]{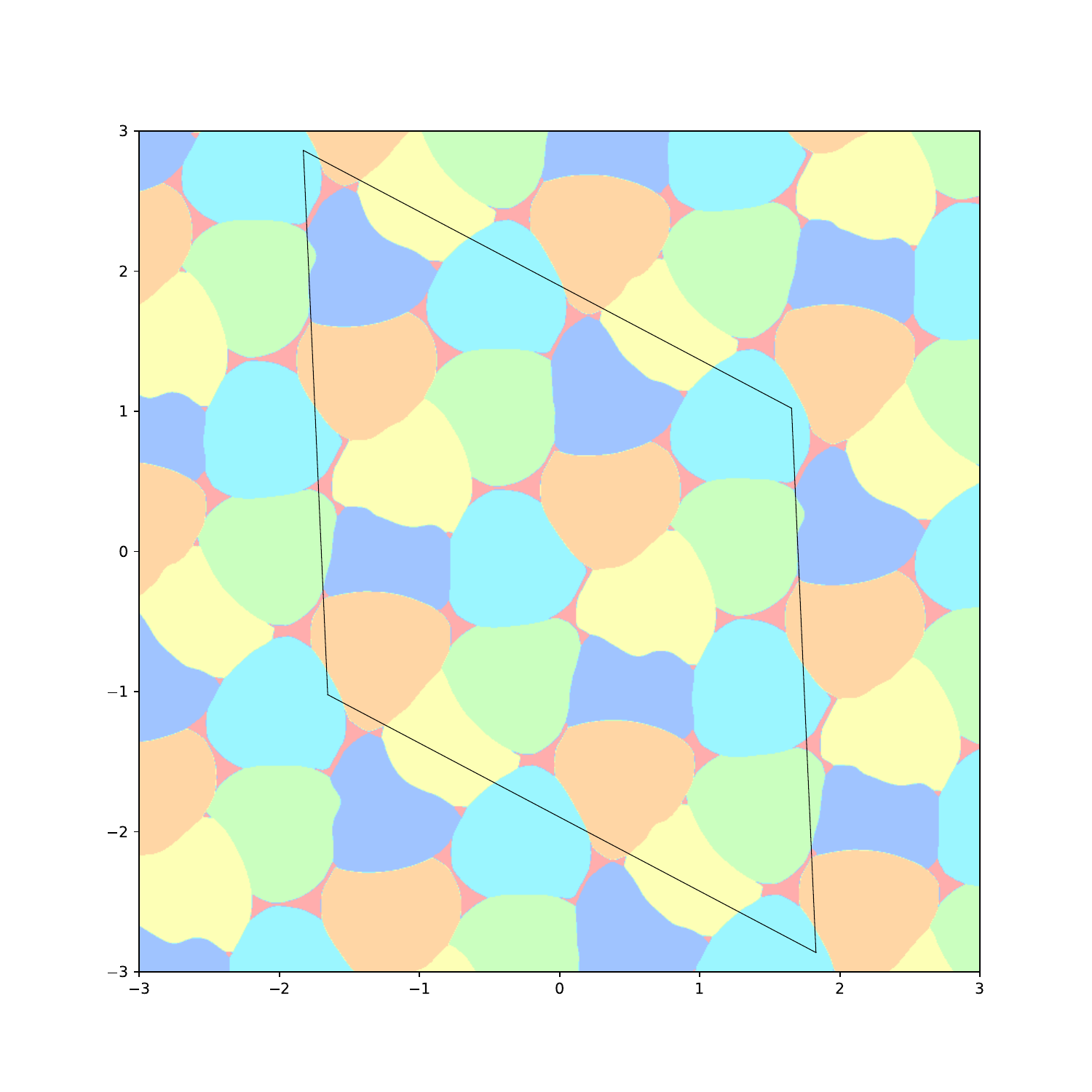}
        \caption{$k = 5$}
    \end{subfigure}
    \hspace{0.02\textwidth}
    \begin{subfigure}[t]{0.31\textwidth}
        \includegraphics[width=\textwidth]{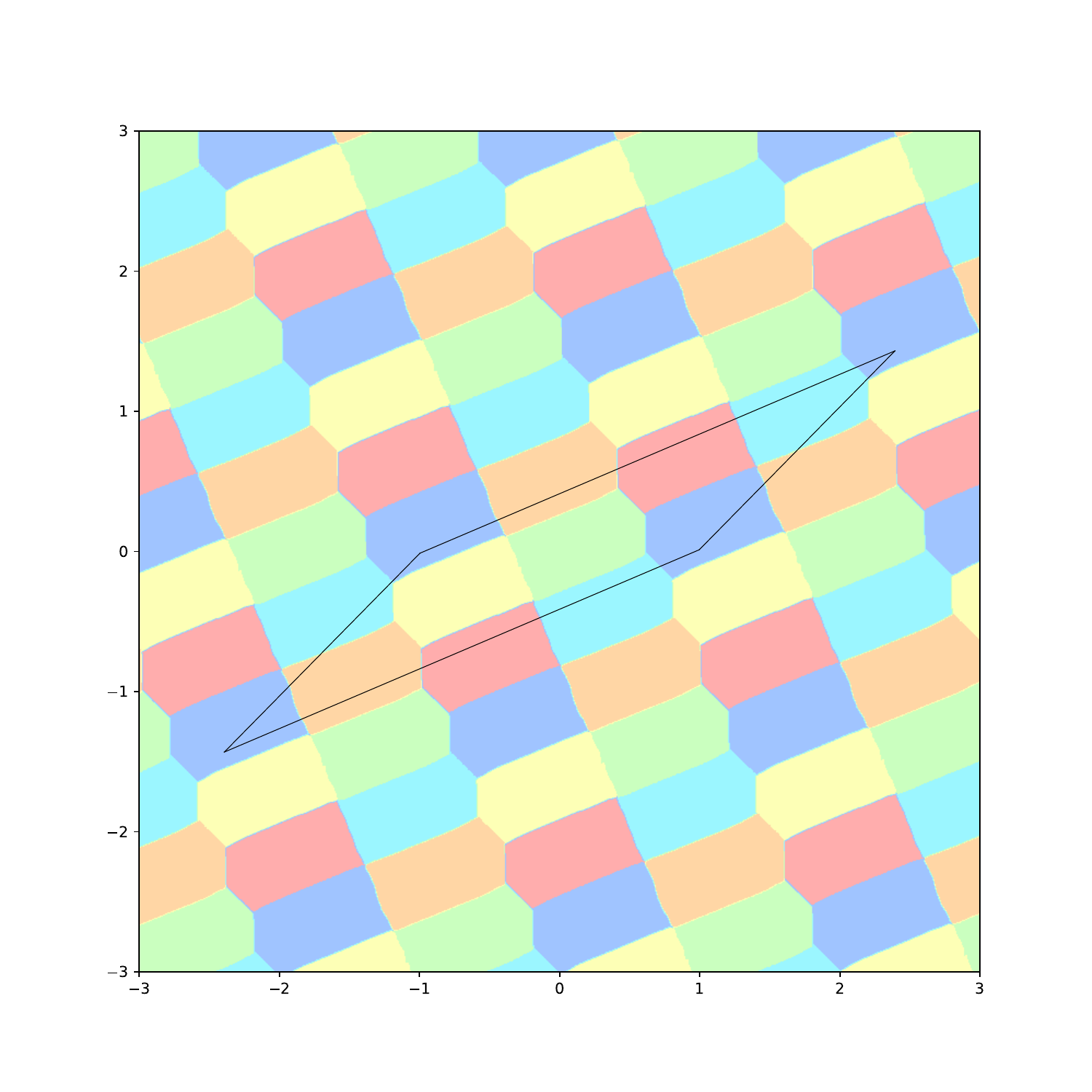}
        \caption{$k = 6$}
    \end{subfigure}

    \caption{Almost colorings for $k = 1, \dots, 6$ obtained through \cref{alg:discrete-almost-coloring}. The parallelograms indicate the fundamental domain along which periodicity is enforced.}
    \label{fig:all-k-colorings}
\end{figure}

In the case of almost colorings, we designed a fully automated pipeline to generate rigorously verified colorings which tessellate the plane using parallelograms and are piecewise constant on smaller parallelograms, as outlined in \cref{alg:discrete-almost-coloring}. To infer the directions of periodicity, we evaluate the trained models from step 1 along radial lines from the origin and translate these lines by increasing offsets. A direction is categorized as a direction of periodicity if the similarity between the original and translated evaluations exceeds a threshold after a distance $d$. From all such directions, we then extract two vectors that have minimal norm and are sufficiently separated in angle. These vectors span the fundamental parallelogram which tiles the plane.

We then retrain a new model enforcing periodicity on this parallelogram and subdivide it into smaller sub-parallelograms (which we refer to as \lq cells\rq) and treat the coloring as constant within each one. For each cell $c_{\text{base}}$, we determine all the cells which contain points that are at unit distance to any point in $c_{\text{base}}$ by computing the cells that intersect with unit circles at the corners of $c_{\text{base}}$. We then turn the unit distance constraint into the stricter (but discrete) constraint of not having cells with the same color if they contain unit distance pairs.

Finally, in step 5, we employ a heuristic to completely eliminate unit distance conflicts while minimizing the use of the "bonus" color. While one could trivially assign the  "bonus" color to all conflicting cells, we instead construct an auxiliary conflict graph whose nodes are conflicting cells and whose edges connect same colored cells at approximate unit distance. By computing a minimum edge cover and setting the selected cells to the "bonus" color, we effectively resolve all remaining conflicts.

\clearpage
\section{Details on variant 2: Avoiding different distances}\label{app:polychromatic-number}

\begin{figure}[h]
    \centering
    \input{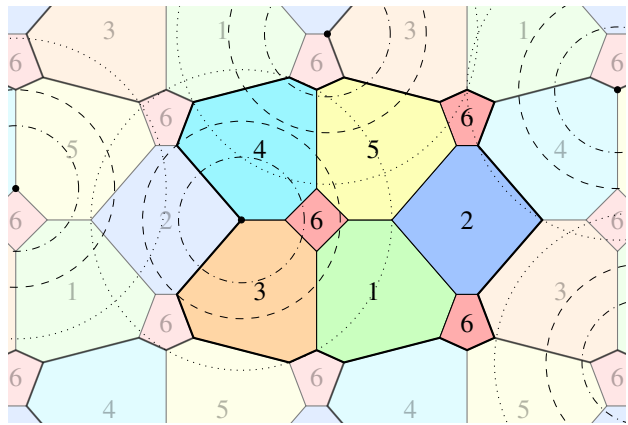}
    \caption{A 6-coloring of the plane where no red points appear at any distance between $0.418$ and $0.657$ while no other color has monochromatic unit-distance pairs. The dotted circles have unit distance radius, while the dashed circles have radius $0.657$  and the dash-dotted circles have radius $0.418$. Shown is the formalized version, which was inspired by the \gls{nn} outputs, see also \cref{fig:inspiration-and-formalization}.}\label{fig:secondcoloring}
\end{figure}

\begin{figure}[h]
    \centering
    \includegraphics[width=0.9\linewidth]{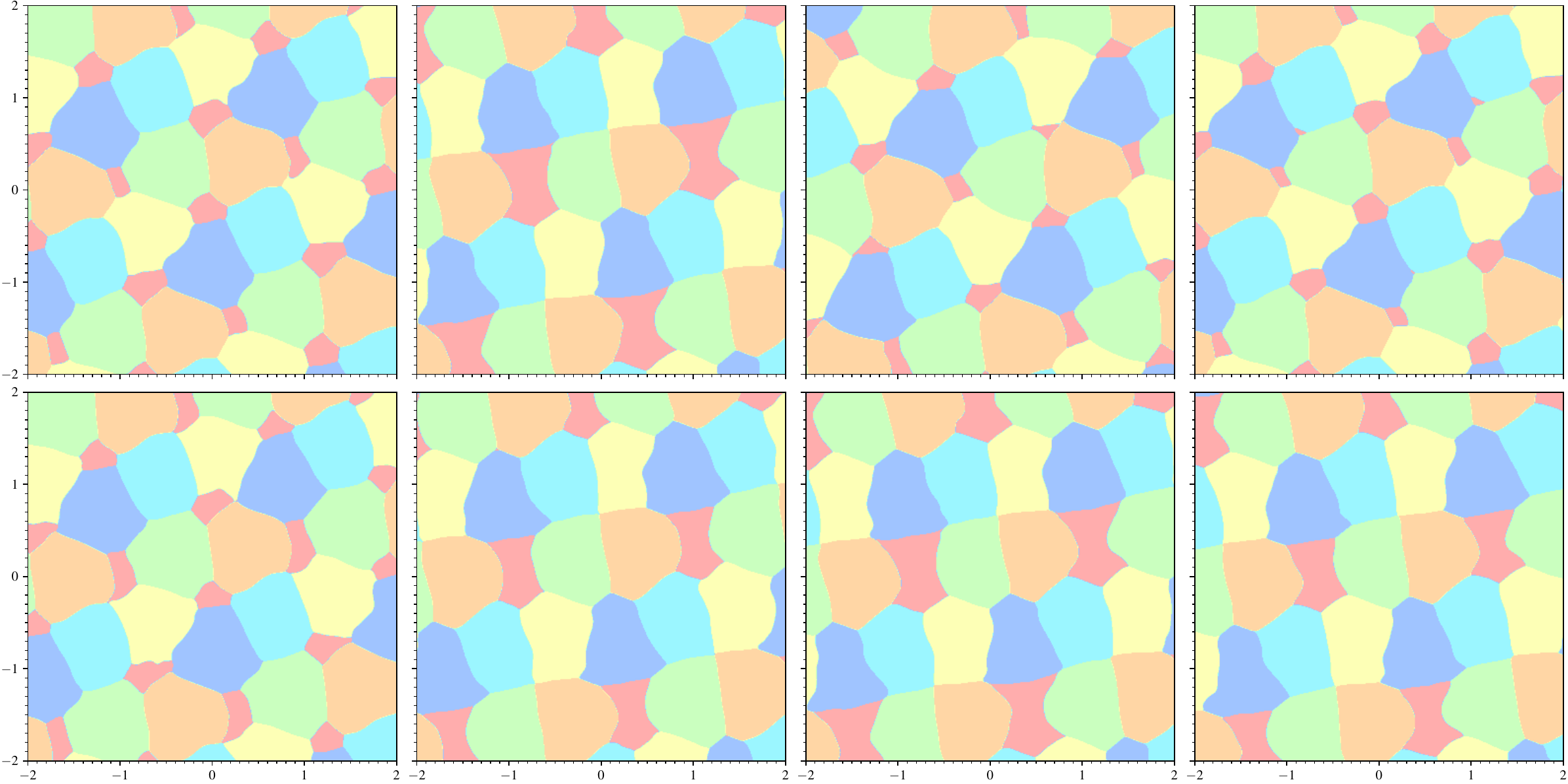}
    \caption{A more detailed view of the colorings generated by a single model for different values of the last distance. Upper row (from left to right): $d=0.4$, $d=0.6$, $d=0.8$, $d=1.0$. Lower row (from left to right): $d=1.2$, $d=1.4$, $d=1.6$, $d=1.8$.}
    \label{fig:one-dist-free-8-colorings}
    
\end{figure}

The polychromatic number of the plane is the minimum number of colors needed such that no color realizes all distances in its monochromatic pairs. While known to be at most 6, finding a five-color solution would constitute a significant improvement. Our search focused on colorings of type $(d_1, \dots, d_5)$, where we set $d_1 = 1$ without loss of generality.
Our approach sampled distances from $[0.2, 2.2]$ and included them directly in the domain of the \gls{nn}. Rather than performing an expensive grid search over the four remaining distances, we employed first- and second-order optimization methods to find optimal values after training. This optimization led to our best result with distances $d_1 = 1$, $d_2 \approx d_3 \approx 1$ and $d_4 \approx d_5 \approx 0.56$, achieving a minimal conflict rate of approximately 5\% (see \cref{fig:polychromatic}). This suggests that the polychromatic number of the plane may indeed be six.

\begin{figure}[h]
    \centering
    \includegraphics[width=0.99\linewidth]{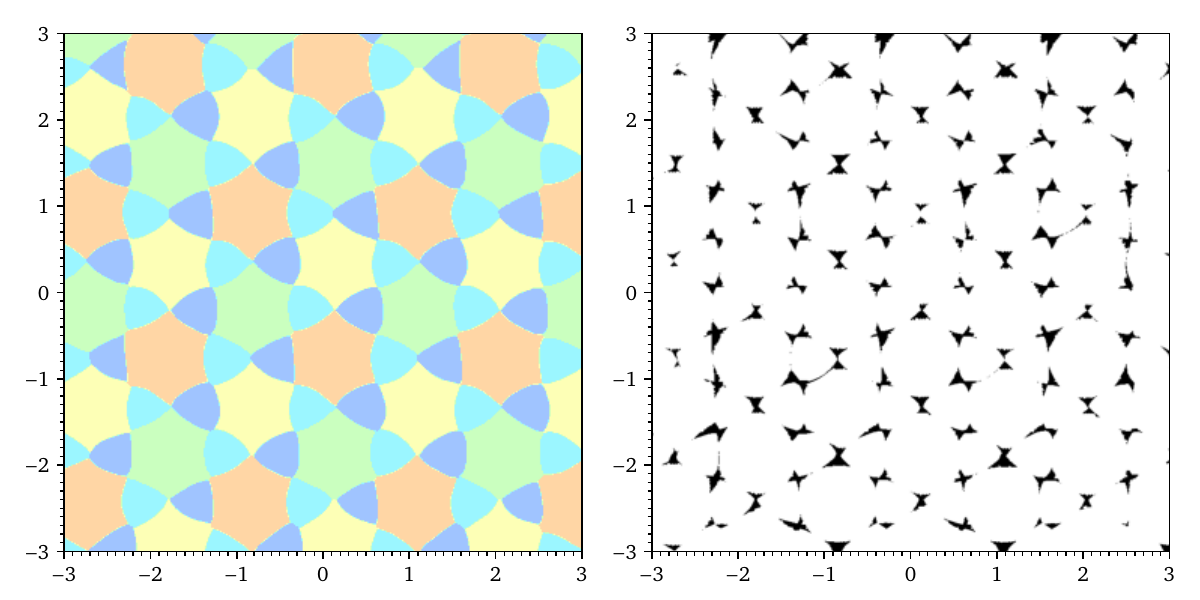}
    \caption{The best five-coloring with $d_1 = 1$, $d_2 \approx d_3 \approx 1$ and $d_4 \approx d_5 \approx 0.56$ and approximately 5\% of conflicts in a box of size $[-3, 3]^2$.}
    \label{fig:polychromatic}
\end{figure}

Note that when including the distances in the domain of the function of the coloring, we conceptually parametrize a whole family of implicit neural representations. Instead of viewing this as a function of both coordinates and distances as in \cref{eq:variable-dist-param}, we can view it as a function-valued function (i.e., an operator):
\begin{equation}\label{eq:polychromatic-operator}
    G : \bigtimes_{k = 1}^c [d_k^{\text{min}}, d_k^{\text{max}}] \to \left\{ g \mid g: \R^2 \to \Delta_c \right\}.
\end{equation}

This perspective connects to the rapidly growing field of operator learning \citep{Kovachki_Lanthaler_Stuart_2024}. DeepONets \citep{Lu_Jin_Pang_Zhang_Karniadakis_2021} are particularly suitable architectures for such mappings, and have proven effective in the numerical treatment of differential equations \citep{Wang_Wang_Perdikaris_2021,2024_MundingerZimmerPokutta_Neuralparameterregression}.

\begin{figure}[h]
    \centering
    \includegraphics[width=0.9\linewidth]{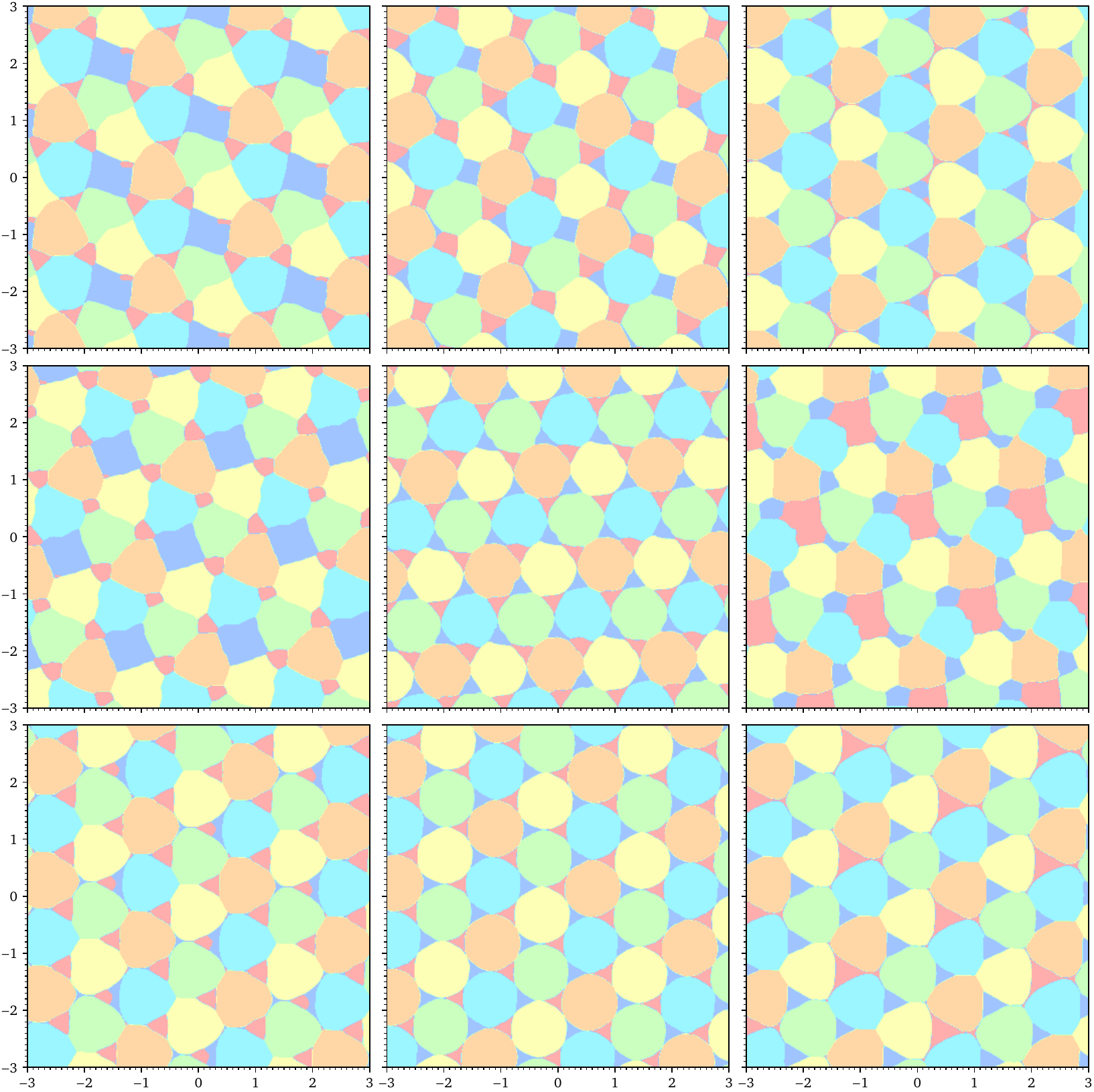}
    \caption{Three different distance configurations generated by the same model: $(d_1, d_2) = (1, 0.4)$ (left), $(0.5, 0.5)$ (middle), and $(0.6, 1.0)$ (right). The bottom two rows show patterns similar to Stechkin's construction at $(0.5, 0.5)$, while the outer columns reveal patterns that led to our formalized constructions.}
    \label{fig:two-free-colors-many-colorings}
    
\end{figure}

\clearpage
\section{Details on Variant 3: Coloring three-dimensional space}\label{app:3d-coloring}

Visualizing and thus also interpreting and formalizing a three-dimensional coloring is much more challenging than in the two dimensional case. To better understand the structure of colorings, we extracted polyhedral regions from the \gls{nn} output and explored them in interactive plots. One of these colorings is shown in \cref{fig:3d-coloring}.

Since our numerical experiments did not suggest the existence of a 14-coloring without unit distance conflicts, we applied an adapted version of \cref{alg:discrete-almost-coloring} to construct an almost-14-coloring of space in a fully automated fashion. It covers all but 3.46\% of $\R^3$, though it seems that this value might be improved through a finer discretization.

\begin{figure}[h]
    \centering
    \includegraphics[width=0.8\textwidth]{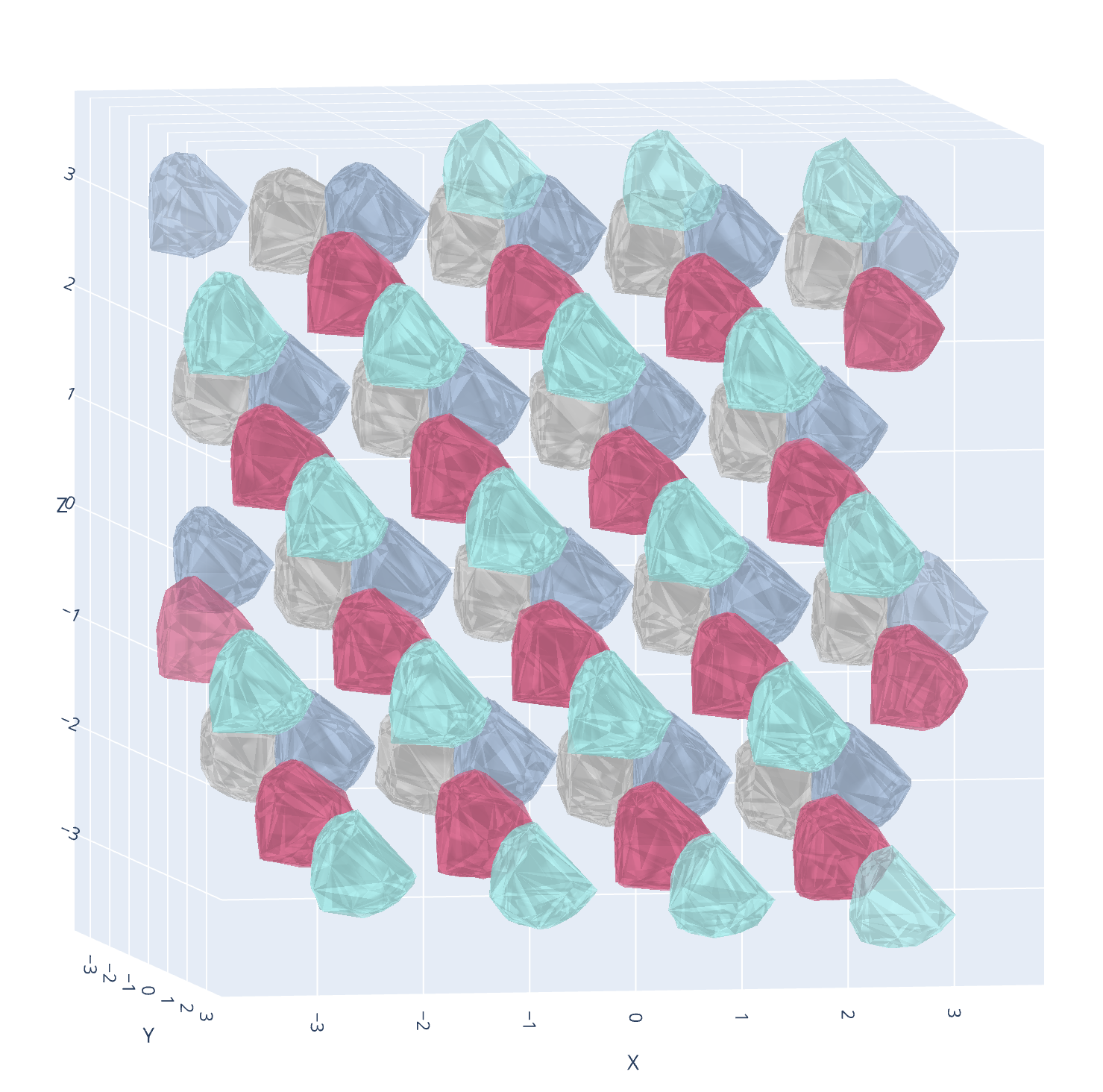}
    \caption{A three-dimensional coloring found by our approach. We only show four out of the fourteen colors.}\label{fig:3d-coloring}
\end{figure}

\clearpage

\section{Details on Variant 4: Avoiding triangles}\label{app:triangles}

\begin{figure}[h]
    \centering
    \includegraphics[width=0.85\textwidth]{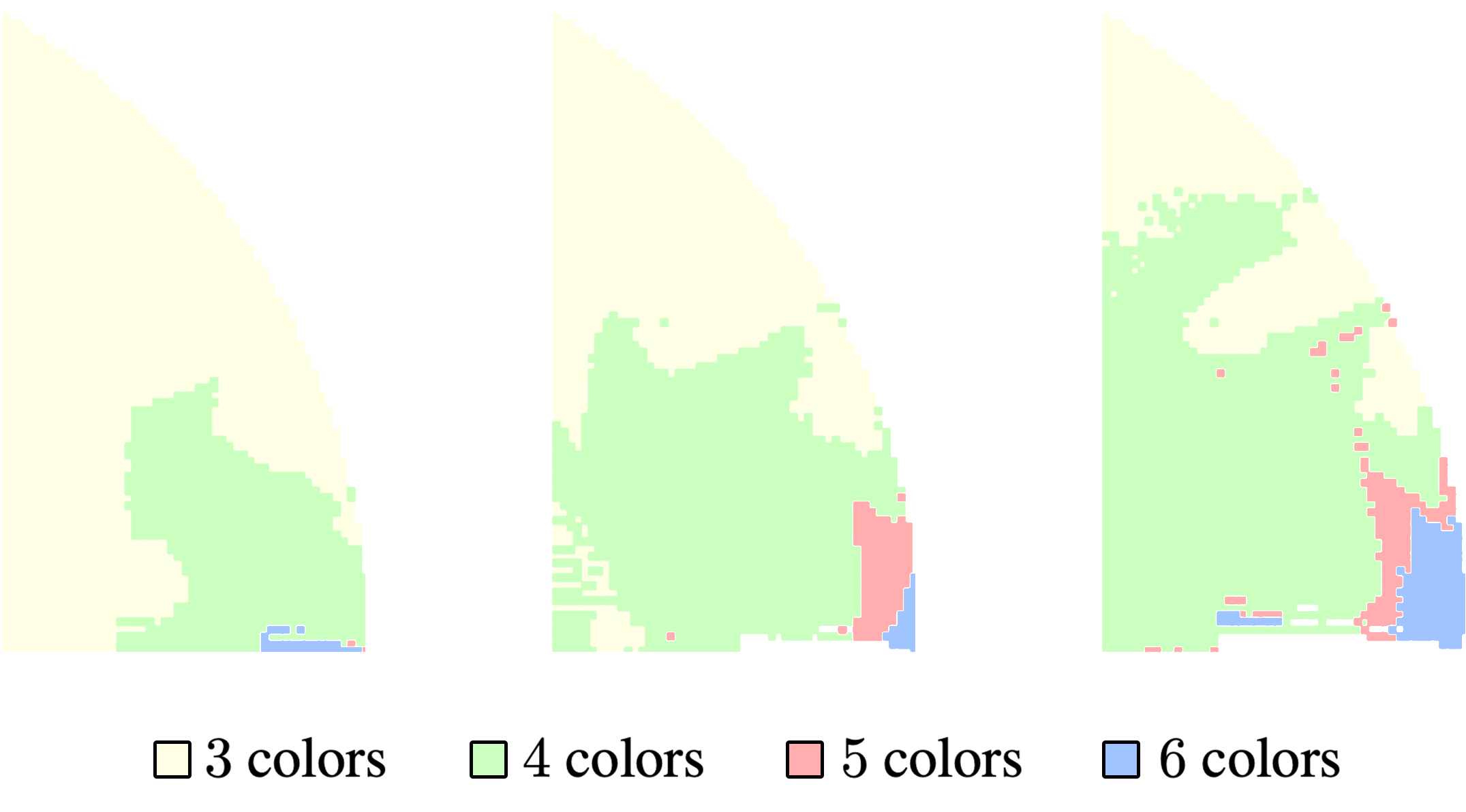}
     \caption{Numerical results showing the achievable color bounds when avoiding monochromatic triangles.  We trained thousands of networks on different sub-areas of the above region and say that a point can be achieved with three, four, five, or six colors if the top 3\% of runs for that point reported that less than 1\% (left), 0.1\% (middle), and 0.01\% (right) of sampled triangles were monochromatic in the argmax coloring derived from the trained \glspl{nn}.}\label{fig:triangle_numerical_appendix}
\end{figure}

\clearpage

\section{Different p-norms}\label{app:p-norms}

Our approach extends to different norms in $\R^n$. The loss function remains the same, while the respective ball $B_1(x)$ is replaced by the $L_p$-ball $B_p(x) = \{y \in \R^n \mid \norm{x-y}_p \leq 1\}$. To generate samples from the $L_p$-ball, we use the probabilistic method proposed by \citet{Barthe_Guedon_Mendelson_Naor_2005}. The cases $p = 1$ and $p = \infty$ in $\R^2$ are relatively straightforward to derive, as there exist colorings using four colors by arranging the balls in a regular grid. However, in particular the case $p = 1$ produces visually interesting results, as not only the trivial constructions are found but variations thereof, see \cref{fig:L1-colorings}.

\begin{figure}[h]
    \centering
    \includegraphics[width=0.8\textwidth]{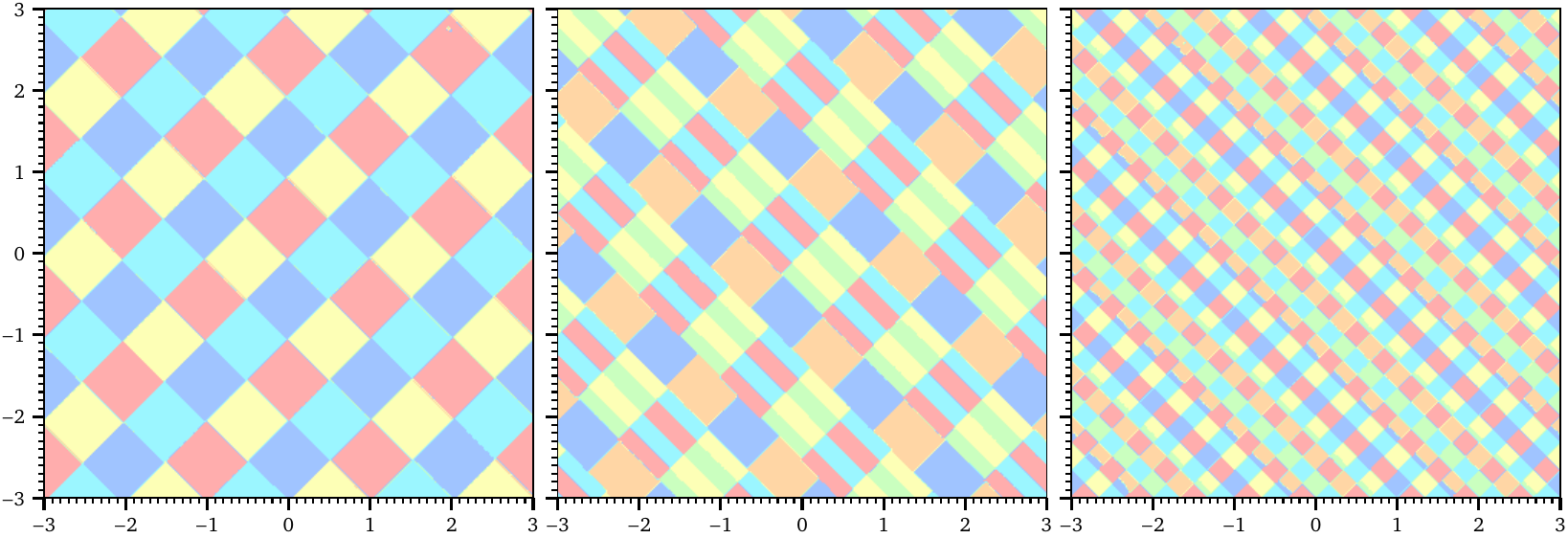}
    \caption{Colorings found for the $L^1$ norm. Left: The trivial grid coloring. Middle and right: Colorings which utilize the degrees of freedom present in the coloring.}\label{fig:L1-colorings}
\end{figure}

\end{document}